\DeclareMathOperator*{\argmin}{argmin}
\begin{document}

\title{\Large Strongly Hierarchical Factorization Machines and ANOVA Kernel Regression\thanks{Supported by DoD Minerva program and AFOSR (grant FA9550-15-1-0159)}}
\author{Ruocheng Guo\thanks{Arizona State University. \{rguo12,halvari,shak\}@asu.edu} \\
\and
Hamidreza Alvari \footnotemark[2] \\
\and
Paulo Shakarian\footnotemark[2] \\
}
\date{}

\maketitle


\fancyfoot[R]{\footnotesize{\textbf{Copyright \textcopyright\ 20XX by SIAM\\
Unauthorized reproduction of this article is prohibited}}}





\begin{abstract}
	%
	High-order parametric models that include terms for feature interactions are applied to various data mining tasks, where ground truth depends on interactions of features.
	However, with sparse data, the high-dimensional parameters for feature interactions often face three issues: expensive computation, difficulty in parameter estimation and lack of structure.
	Previous work has proposed approaches which can partially resolve the three issues.
	%
	In particular, models with factorized parameters (e.g. Factorization Machines) and sparse learning algorithms (e.g. FTRL-Proximal) can tackle the first two issues but fail to address the third. 
	Regarding to unstructured parameters, constraints or complicated regularization terms are applied such that hierarchical structures can be imposed.
	However, these methods make the optimization problem more challenging.
	In this work, we propose Strongly Hierarchical Factorization Machines and ANOVA kernel regression where all the three issues can be addressed without making the optimization problem more difficult.
	Experimental results show the proposed models significantly outperform the state-of-the-art in two data mining tasks: cold-start user response time prediction and stock volatility prediction.

\end{abstract}
\section{Introduction}
\label{sec:intro}
%
%
%
%
%
%

In the area of data mining, there exist many high-order parametric models, which explicitly incorporate terms for modeling interactions between features.
In the applications such as prediction of users' behavior in social media~\cite{rendle2012social,hong2013co,zhu2012discovering}, movie ratings~\cite{rendle2010factorization} and stock return volatility~\cite{blondel2016polynomial}, co-occurrence of features can be crucial to decide the ground truth labels. 
For example, the observation that a user retweeted the microblog \textit{SIAM SDM deadlines are approaching.} just a moment after it had been posted can result from the co-occurrence of the word \textit{SDM} and the phrase \textit{data mining researcher} in the user's profile. 
In the case of models including feature interactions, the problem is to learn a function that maps features and their interactions to a scalar such that a predefined loss function is minimized.
In the most straightforward high-order models (i.e. Polynomial Regression), each feature interaction is modeled by an independent parameter.
This leads to the problem of high-dimensional parameters.
In sparse and high-dimensional settings where the number of nonzero elements in each feature vector is much smaller than its dimension, there are mainly three issues:
	1. Expensive computation: the number of parameters increases exponentially with the order number.
	2. Difficulty in estimating high-dimensional parameters with sparse data: for example, given a pair of features $(x_i,x_j)$, it requires enough samples with $x_ix_j \not = 0$ for reliable parameter estimation which is not the case in sparse data.
	3. Lack of structure between parameters: it is hard to justify models where interaction $x_ix_j$ plays an important role in prediction but neither $x_i$ nor $x_j$ does.
	%
%

To address the first two issues, one idea is to learn sparse models. Approaches such as Lasso~\cite{tibshirani1996regression} and Elastic net~\cite{zou2005regularization} were proposed, which apply regularization terms (e.g. $L_1$ norm of parameters) that can lead to sparsity.
Recently, follow-the-regularized-leader algorithms such as RDA~\cite{xiao2010dual} and FTRL-Proximal~\cite{mcmahan2011follow} have been shown to be effective in producing sparsity for generalized linear models.
%
%
%
Another idea is to develop novel models that can handle interaction effects with low-dimensional parameters.
Models such as Factorization Machines (FMs) and ANOVA kernel regression~\cite{rendle2010factorization,blondel2016polynomial} resolve the first two issues by modeling interaction effects with low-rank factorized parameters.

As shown in~\cite{liu2016efficient,bien2013lasso,choi2010variable,zhong2013efficient}, hierarchical structures between main and interaction effects contribute to models effectiveness and selection of important features and interactions.
To address the third issue, Bien et al.~\cite{bien2013lasso} defined strong and weak hierarchy.
Strong (weak) hierarchy demonstrates that an interaction effect could be with non-zero weight iff both (one) of the corresponding linear terms are.
%
%
They also proposed the Weak Hierarchical Lasso where constraints are added to the optimization problem to guarantee weak hierarchy.
Then, in~\cite{li2016difacto}, Li et al. mentioned a structured sparsity which can impose strong hierarchy to their model with a complicated regularization term.
However, both methods mentioned above can limit two types of operations :
applying other regularization or constraints for specific purpose (e.g. domain prior knowledge),
usage of efficient algorithms which are only applicable for certain types of loss functions (e.g. convex functions) or regularizations (e.g. $L_1$ regularization).
The facts listed above motivate us to propose our models. 
We show that strong hierarchy can be imposed by adding a context dimension to FMs and ANOVA kernel regression.
Thus, we can address the three issues simultaneously and leave the optimization problem without extra constraints or complicated regularization terms.
We list our contributions as below:
\begin{itemize}
	\item We propose Strongly Hierarchical FMs and ANOVA kernel regression where the three issues mentioned above are addressed without making the optimization problem more difficult.
	\item We show that predictions can be made with these models, time complexity is linear to data dimension (or average number of nonzero features for sparse data) and the number of latent dimensions.
	\item We derive a FTRL-Proximal class algorithm for the proposed models.
	Analysis shows that the time and space complexity of this algorithm is linear to the feature dimension and the number of latent factors.
	\item Experimental results show our proposed models can significantly outperform the state-of-the-art ones.
	Moreover, these models can achieve high sparsity without significant loss of performance. 
\end{itemize}
The remaining of the paper is organized as follows: we give
a brief review of background knowledge in Section~\ref{sec:bkg}.
In Section~\ref{sec:oa}, we introduce the proposed models and some properties of them. We also derive and analyze an efficient FTRL-Proximal algorithm for them.
Then, we describe the experimental setup and results in Section~\ref{sec:exp}.
Finally, we summarize related work and conclude this paper in Section~\ref{sec:rw} and~\ref{sec:conclu}, respectively.

\section{Background}
\label{sec:bkg}
In this section, we start with the preliminaries. 
Next, we describe hierarchical structures amongst parameters.
Finally, we introduce FMs~\cite{rendle2010factorization} and ANOVA kernel regression~\cite{blondel2016polynomial}. 
\subsection{Preliminaries}
Bold uppercase (e.g. $\boldsymbol{W}$), bold lowercase (e.g. $\boldsymbol{x}$) and lowercase letters (e.g. $y$) denote matrices, vectors and scalars, respectively.
Subscripts refer to rows, columns of matrices or elements of matrices and vectors.
For example, notations $\boldsymbol{W}_{i,:}$ and $\boldsymbol{W}_{:,j}$ denote the $i$th row and $j$th column of matrix $\boldsymbol{W}$, respectively.
Superscripts denote the order number of kernels (e.g. $\mathcal{A}^2$) or the iteration number (e.g. $\boldsymbol{G}^{1:t}$).
\subsection{Factorization Machines and ANOVA kernel regression}

Previous work ~\cite{juan2016field,liu2016efficient,qiang2013exploiting} has shown that augmenting feature vector $\boldsymbol{x}$ with interaction effects $x_ix_j$ can significantly improve performance of models in various data mining tasks such as user response prediction and microblog retrieval.
One of the simplest models which take into account interaction effects is the polynomial regression (PR). With second-order PR, predictions are made as follows:
\begin{equation}
\hat{y}_{PR}(\boldsymbol{x}) = b+ \langle\boldsymbol{\omega},\boldsymbol{x}\rangle+ \sum_{i=1}^{d}\sum_{j=i+1}^{d}{W}_{i,j}x_ix_{j}
\label{eq:PR}
\end{equation}
where $\boldsymbol{\omega} \in \mathds{R}^d$, $\boldsymbol{W} \in \mathds{R}^{d^2}$ and $d$ is the dimension of data.
The number of parameters can be $\mathcal{O}(d^m)$ for $m$th-order PR or SVM with polynomial kernel. 
They are not able to scale well with high-dimensional data. Moreover, this can result in models where significant amount of parameters are fitted only by very few samples because of data sparsity.
Following~\cite{bien2013lasso}, in this paper, we refer to linear terms (e.g. $\langle\boldsymbol{\omega},\boldsymbol{x}\rangle$) as main effects and second-order terms (e.g. $\sum_{i=1}^{d}\sum_{j=i+1}^{d}{W}_{i,j}x_ix_{j}$) as interaction effects.
In~\cite{rendle2010factorization}, Rendle proposed FM by factorizing the parameters for feature interactions $\boldsymbol{W} = \boldsymbol{V}\boldsymbol{V}^T$, where $\boldsymbol{V} \in \mathds{R}^{d\times k}$ and positive integer $k\ll d$ is the number of latent factors. FM is defined as:
\begin{equation}
\hat{y}_{FM}(\boldsymbol{x}) = b +  \langle\boldsymbol{\omega},\boldsymbol{x}\rangle+\sum_{i=1}^{d}\sum_{j=i+1}^d\langle\boldsymbol{v}_i,\boldsymbol{v}_j\rangle{x}_i{x}_j
\label{eq:FM}
\end{equation}
Following~\cite{blondel2016polynomial}, ANOVA kernel regression is defined as:
\begin{equation}
\small
\hat{y}_{\mathcal{A}^2}(\boldsymbol{x}) = b +  \langle\boldsymbol{\omega},\boldsymbol{x}\rangle+\sum_{f=1}^{k}\boldsymbol{\beta}_f\mathcal{A}^2(\boldsymbol{V}_{:,f},\boldsymbol{x})
\label{eq:A2}
\end{equation}
where $\mathcal{A}^m(\boldsymbol{\cdot},\boldsymbol{\cdot})$ is the $m$th-order ANOVA kernel proposed in~\cite{stitson1999support}.
Given vectors $\boldsymbol{a},\boldsymbol{b} \in \mathds{R}^d$ and $d > m$, the ANOVA kernel is formally defined as:
\begin{equation}
\small
\mathcal{A}^m(\boldsymbol{a},\boldsymbol{b}) = \sum_{i_1=1}^d...\sum_{i_m>i_{m-1}}^da_{i_1}b_{i_1}...a_{i_m}b_{i_m}
\end{equation}
We denote second-order ANOVA kernel regression model by symbol $\mathcal{A}^2$.
In~\cite{blondel2016polynomial}, it is shown that FM is a special case of ANOVA kernel regression with $\boldsymbol{\beta} = \boldsymbol{1}$.

\subsection{Hierarchical Structures for Parameters}
	%
	
	Bien et al. defined strong and weak hierarchy in~\cite{bien2013lasso}.
	Using notations from (\ref{eq:PR}), they are defined as:
	\begin{Definition}
	Strong hierarchy: ${W}_{i,j} \not = 0\Rightarrow {\omega}_i \not = 0 \; and \; {\omega}_j \not = 0$
	Weak hierarchy: ${W}_{i,j} \not = 0 \Rightarrow {\omega}_i \not = 0 \; or \; {\omega}_j \not = 0$
	\end{Definition}
	Here, we demonstrate the intuition of these two constraints by the example given in the introduction.
	In prediction of when a given user would retweet a particular tweet, it is difficult to justify a model which states that the co-occurrence of the phrase \textit{data mining researcher} in user profile and the abbreviation \textit{SDM} in tweet text is crucial but ignores main effect of either of them.
	%
	%
	%
	%
	%
	%
	%
	In~\cite{bien2013lasso}, it is shown that weak hierarchy can be imposed by adding constraints $||\boldsymbol{W}_{:,j}||_1 <
		|{\omega}_j| \; for \; j=1,...,d$ to the optimization problem with $L_1$ regularization.
	where $\mathcal{L}(y,\hat{y})$ is the loss function and $||\boldsymbol{W}_{:,j}||_1=\sum_{i=1}^d|W_{i,j}|$.
	On the other hand, in~\cite{li2016difacto}, Li et al. mentioned the method to impose strong hierarchy to FMs with the structured sparsity regularization term~\cite{negahban2009unified}:
	$\sum_{i=1}^d\{[\boldsymbol{\omega}_i^2+||\boldsymbol{W}_{:,i}||_2^2]^{\frac{1}{2}}+||\boldsymbol{W}_{:,i}||_2\}.$
	As shown above, the optimization problem would become more challenging if such methods are applied. This is the main motivation for us to propose our models.
\section{Strong Hierarchy with Context Dimension}
\label{sec:oa}

	In this section, we begin with derivation of our proposed model where strong hierarchy is imposed by incorporating the \textit{context dimension} into FM and ANOVA kernel regression.
	Then we demonstrate efficient computation can be done with these models.
	Finally, we derive and analyze the per-coordinate FTRL-Proximal algorithm for our proposed models.
	Since models such as FM can be extended to high-order, without loss of generality, we focus on second-order models in this paper.	
	

	\subsection{The Proposed Models}
	Here, we propose \textit{Strongly Hierarchical ANOVA kernel regression} (SH$\mathcal{A}^2$) and its special case \textit{Strongly Hierarchical Factorization Machines} (SHFMs).
	First, we show how strong hierarchy of these two models are guaranteed without extra constraints or regularization terms. 
	%
	%
	In the proposed models, to create hierarchical structure, parameters for main and interaction effects are connected by considering main effects as interactions between features and the constant \textit{context feature} ($x_0=1$). Then, main effects become $\sum_{i=1}^{d}\langle\boldsymbol{v}_i\odot\boldsymbol{\beta},\boldsymbol{v}_0\rangle x_ix_0$, where $\boldsymbol{v}_0$ is the \textit{context latent factor}.
	Moreover, we can merge the main effects into interactions by concatenating the \textit{context dimension} ($x_0$, $\boldsymbol{v}_0$) to features and parameters, respectively: $\boldsymbol{V}^{\prime} = [\boldsymbol{v}_0,\boldsymbol{v}_1,...,\boldsymbol{v}_d]$ and $\boldsymbol{x}^{\prime}=[x_0,x_1,...,x_d]$.
	%
	%
	Finally, we formulate SH$\mathcal{A}^2$ as:
	\begin{equation}\label{eq:A2SH}
	\small
	\begin{split}
	\hat{y}_{SH\mathcal{A}^2}(\boldsymbol{x}) &= b+ \sum_{i=0}^{d}\sum_{j=i+1}^{d}\langle\boldsymbol{v}_i\odot\boldsymbol{\beta},\boldsymbol{v}_j\rangle x_ix_j \\ &=  b +  \sum_{f=1}^k\beta_f\mathcal{A}^2(\boldsymbol{V}_{:,f}^{\prime},\boldsymbol{x}^{\prime})\\
	\end{split}
	\end{equation}
	In~\cite{blondel2016polynomial}, authors proved that fitting $\boldsymbol{\beta}$ can be helpful when the order number is even.
	Therefore, we consider both the cases: the parameter vector $\boldsymbol{\beta}$ takes constant value $\boldsymbol{1}$ (SHFMs); $\boldsymbol{\beta}$ is estimated as parameters (SH$\mathcal{A}^2$).
	%
	%
	\begin{proposition}
		Strong hierarchy is guaranteed in parameters of SHFMs with assumptions: 1). $\boldsymbol{v}_0 \not = \boldsymbol{0}$; 2). $\boldsymbol{v}_0 \not \perp \boldsymbol{v}_i \; , i = 1,...,d$.
		\label{prop:sh}
	\end{proposition}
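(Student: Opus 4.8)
The plan is to expand the model equation (\ref{eq:A2SH}) in the SHFM case $\boldsymbol{\beta}=\boldsymbol{1}$, read off the implicit main- and interaction-effect coefficients, and then check the strong-hierarchy implication for each pair of features. First I would use $x_0=1$ to split the double sum in (\ref{eq:A2SH}): the $i=0$ terms contribute $\sum_{j=1}^d\langle\boldsymbol{v}_0,\boldsymbol{v}_j\rangle x_j$ and the $i\ge 1$ terms contribute $\sum_{1\le i<j\le d}\langle\boldsymbol{v}_i,\boldsymbol{v}_j\rangle x_i x_j$. Matching this against the polynomial form (\ref{eq:PR}) shows that SHFM realizes main-effect coefficients $\omega_i=\langle\boldsymbol{v}_0,\boldsymbol{v}_i\rangle$ and interaction coefficients $W_{i,j}=\langle\boldsymbol{v}_i,\boldsymbol{v}_j\rangle$. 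Strong hierarchy for these is exactly the assertion that $\langle\boldsymbol{v}_i,\boldsymbol{v}_j\rangle\neq 0$ implies both $\langle\boldsymbol{v}_0,\boldsymbol{v}_i\rangle\neq 0$ and $\langle\boldsymbol{v}_0,\boldsymbol{v}_j\rangle\neq 0$.

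Next I would carry out the (short) argument. Assumption~2, that $\boldsymbol{v}_0\not\perp\boldsymbol{v}_\ell$ for all $\ell=1,\dots,d$, is by definition the same as $\langle\boldsymbol{v}_0,\boldsymbol{v}_\ell\rangle\neq 0$, i.e. $\omega_\ell\neq 0$, for every $\ell$; hence for an arbitrary pair $(i,j)$ the conclusion $\omega_i\neq 0$ and $\omega_j\neq 0$ holds, and in particular it holds whenever $W_{i,j}\neq 0$, which is precisely strong hierarchy. (When $W_{i,j}=0$ the implication is vacuous; and when $W_{i,j}=\langle\boldsymbol{v}_i,\boldsymbol{v}_j\rangle\neq 0$ one also sees directly that $\boldsymbol{v}_i\neq\boldsymbol{0}$ and $\boldsymbol{v}_j\neq\boldsymbol{0}$, a consistent picture.) Assumption~1, $\boldsymbol{v}_0\neq\boldsymbol{0}$, rules out the degenerate configuration in which every main effect collapses; I would note that for $d\ge 1$ it is already implied by Assumption~2, so it is kept mainly for emphasis.

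I do not expect a genuine analytic obstacle: the whole content sits in the bookkeeping of the first paragraph, namely recognising that once the context dimension is absorbed the main-effect coefficient of feature $i$ is \emph{defined} to be $\langle\boldsymbol{v}_0,\boldsymbol{v}_i\rangle$, so ``$\boldsymbol{v}_0$ not orthogonal to $\boldsymbol{v}_i$'' and ``$\omega_i\neq 0$'' are literally the same statement. The one point worth spelling out is that the two hypotheses are exactly calibrated to the claim rather than merely convenient: if Assumption~2 is dropped one may take $\boldsymbol{v}_i,\boldsymbol{v}_j$ nonzero and non-orthogonal but each orthogonal to $\boldsymbol{v}_0$, so $W_{i,j}\neq 0$ while $\omega_i=\omega_j=0$, violating strong hierarchy; and if $\boldsymbol{v}_0=\boldsymbol{0}$ every $\omega_i$ vanishes while the $W_{i,j}$ need not. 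Thus under the stated assumptions the strong-hierarchy implication holds for every pair $(i,j)$, which is what the proposition claims.
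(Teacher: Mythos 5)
Your identification of the implicit coefficients, $\omega_i=\langle\boldsymbol{v}_0,\boldsymbol{v}_i\rangle$ and $W_{i,j}=\langle\boldsymbol{v}_i,\boldsymbol{v}_j\rangle$, is exactly the reduction the paper relies on, and your counterexamples showing both assumptions are needed are a nice addition the paper omits. The one place you diverge is in how you read Assumption~2, and it changes which step carries the weight. You take ``$\boldsymbol{v}_0\not\perp\boldsymbol{v}_i$ for $i=1,\dots,d$'' literally, so that every $\omega_i\neq 0$ and strong hierarchy follows vacuously for every pair; under that reading your remark that $W_{i,j}\neq 0$ forces $\boldsymbol{v}_i\neq\boldsymbol{0}$ and $\boldsymbol{v}_j\neq\boldsymbol{0}$ is indeed just ``a consistent picture.'' But the literal reading also forces $\boldsymbol{v}_i\neq\boldsymbol{0}$ for all $i$ (since $\langle\boldsymbol{v}_0,\boldsymbol{0}\rangle=0$), which contradicts the sparsity the model is designed to produce and makes the proposition trivial. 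The paper's proof, and its experimental justification of the assumption (it checks that ``$\boldsymbol{v}_0\perp\boldsymbol{v}_i$ with $\boldsymbol{v}_i\neq\boldsymbol{0}$'' is rare, explicitly excluding the $\boldsymbol{v}_i=\boldsymbol{0}$ cases), show the intended hypothesis is the conditional one: $\boldsymbol{v}_0\not\perp\boldsymbol{v}_i$ \emph{whenever} $\boldsymbol{v}_i\neq\boldsymbol{0}$. Under that reading the step you parenthesize becomes essential: one must first infer $\boldsymbol{v}_i\neq\boldsymbol{0}$ and $\boldsymbol{v}_j\neq\boldsymbol{0}$ from $\langle\boldsymbol{v}_i,\boldsymbol{v}_j\rangle\neq 0$, and only then invoke the assumption to get $\omega_i\neq 0$ and $\omega_j\neq 0$. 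Your argument is correct as written for the hypothesis as literally stated, and it upgrades to the paper's proof by promoting that aside to the first line; I would also then retract the claim that Assumption~1 is redundant, since under the conditional reading it no longer follows from Assumption~2.
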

	\begin{proof}
		For a pair of features $(x_i,x_j)$, given $\langle\boldsymbol{v}_i,\boldsymbol{v}_j\rangle \not = 0$, we can infer that $\boldsymbol{v}_i \not = \boldsymbol{0}$ and $\boldsymbol{v}_j \not = \boldsymbol{0}$.
		Further,
		we can conclude $\langle\boldsymbol{v}_i,\boldsymbol{v}_0\rangle \not = 0$ ($\langle\boldsymbol{v}_j,\boldsymbol{v}_0\rangle \not = 0$) by $\boldsymbol{v}_i \not = \boldsymbol{0}$ ($\boldsymbol{v}_j \not = \boldsymbol{0}$) and the two assumptions.
		Therefore, strong hierarchy is guaranteed by
		$\langle\boldsymbol{v}_i,\boldsymbol{v}_j\rangle \not = 0 \Rightarrow \langle\boldsymbol{v}_i,\boldsymbol{v}_0\rangle \not = 0 \; and \; \langle\boldsymbol{v}_j,\boldsymbol{v}_0\rangle \not = 0$.
		This means that the interaction effect between $x_i$ and ${x}_j$ will be included in the model iff both of their main effects are.
	\end{proof}
	%
	%
	We justify the second assumption of Proposition~\ref{prop:sh} by experiments in Section~\ref{sec:exp} showing the probability of cases where $\boldsymbol{v}_0 \perp \boldsymbol{v}_i$ and $\boldsymbol{v}_0 \not = \boldsymbol{0}$ is significantly lower than those with $\boldsymbol{v}_i = 0$.
	Next, we demonstrate Proposition~\ref{prop:kd} about the time complexity of making a prediction with SH$\mathcal{A}^2$ (\ref{eq:A2SH}).
	This follows the conclusion from~\cite{rendle2010factorization,blondel2016polynomial} that the time complexity of making a prediction with FMs and $\mathcal{A}^2$ can be reduced from $\mathcal{O}(kd^2)$ to $\mathcal{O}(kd)$.
	\begin{proposition}\label{prop:kd}
		Time complexity of making a prediction with SH$\mathcal{A}^2$ is $\mathcal{O}(kd)$. 
	\end{proposition}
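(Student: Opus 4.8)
The plan is to exploit the fact that, by the second line of (\ref{eq:A2SH}), a prediction of SH$\mathcal{A}^2$ is just the bias $b$ plus a $\boldsymbol{\beta}$-weighted sum of $k$ second-order ANOVA kernels evaluated on the augmented vectors $\boldsymbol{V}_{:,f}^{\prime}$ and $\boldsymbol{x}^{\prime}$, each of which lives in $\mathds{R}^{d+1}$. So it suffices to show that a single evaluation $\mathcal{A}^2(\boldsymbol{V}_{:,f}^{\prime},\boldsymbol{x}^{\prime})$ costs $\mathcal{O}(d)$; summing over the $k$ latent factors and adding $b$ then yields $\mathcal{O}(kd)$.

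First I would recall the closed-form rewriting of the second-order ANOVA kernel used in~\cite{rendle2010factorization,blondel2016polynomial}: for $\boldsymbol{a},\boldsymbol{b}\in\mathds{R}^n$,
\begin{equation}
\mathcal{A}^2(\boldsymbol{a},\boldsymbol{b}) = \sum_{i<j}a_ib_ia_jb_j = \frac{1}{2}\Bigl[\bigl(\sum_i a_ib_i\bigr)^2 - \sum_i a_i^2 b_i^2\Bigr],
\end{equation}
which is immediate from expanding $(\sum_i a_ib_i)^2 = \sum_i a_i^2 b_i^2 + 2\sum_{i<j}a_ib_ia_jb_j$. Applying this with $n=d+1$, for each latent factor $f$ I would form the two scalars $s_f = \sum_{i=0}^{d} V'_{i,f}x'_i$ and $q_f = \sum_{i=0}^{d} (V'_{i,f})^2(x'_i)^2$, each a sum of $d+1$ terms and hence computable in $\mathcal{O}(d)$ time, and then set $\mathcal{A}^2(\boldsymbol{V}_{:,f}^{\prime},\boldsymbol{x}^{\prime}) = \frac{1}{2}(s_f^2 - q_f)$ in $\mathcal{O}(1)$. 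Multiplying by $\beta_f$ and accumulating into a running total is $\mathcal{O}(1)$ per factor, so the loop over $f=1,\dots,k$ costs $\mathcal{O}(kd)$, and the final addition of $b$ is $\mathcal{O}(1)$; this gives the claimed bound. (For the SHFMs special case $\boldsymbol{\beta}=\boldsymbol{1}$ the same argument applies verbatim.)

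I do not expect a genuine obstacle: the argument is essentially that the $\mathcal{O}(kd)$ bound already known for FM and $\mathcal{A}^2$ transfers directly once one observes that appending the single context coordinate $(x_0,\boldsymbol{v}_0)$ only changes the working dimension from $d$ to $d+1=\mathcal{O}(d)$. The one place I would be careful is to state the kernel identity cleanly and check the index bookkeeping ($0\le i<j\le d$), and — for completeness — to remark that for sparse $\boldsymbol{x}$ the sums over $i$ range only over the support of $\boldsymbol{x}^{\prime}$, so $d$ may be replaced by the average number of nonzero features, consistent with the claim made in the surrounding text.
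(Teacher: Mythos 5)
Your proposal is correct and follows essentially the same route as the paper: the standard rewriting $\mathcal{A}^2(\boldsymbol{a},\boldsymbol{b})=\frac{1}{2}[(\sum_i a_ib_i)^2-\sum_i a_i^2b_i^2]$ applied per latent factor on the $(d+1)$-dimensional augmented vectors, which is exactly the identity the paper records in (\ref{eq:sparseA2SH}) (there restricted to the support set $I$) and attributes to the known $\mathcal{O}(kd)$ reduction for FMs and $\mathcal{A}^2$. Your closing remark about replacing $d$ by $card(I)$ for sparse inputs matches the paper's own discussion as well.
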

	The proof of Proposition~\ref{prop:kd} can be found in Appendix.
		%
%
	Experimental results showing the linear time complexity with the proposed models can be found in Section~\ref{sec:exp}. 
	It worth noting that if $x_i=0$, no computation is needed for the dimension $i$ as all terms w.r.t $x_i$ would be $0$.
	Therefore, when the data is sparse, we can write the right-hand side of (\ref{eq:A2SH}) as: 
	\begin{equation}
	\small
	\label{eq:sparseA2SH}
	\frac{1}{2}\sum_{f=1}^k\beta_f[(\sum_{i \in I}V_{i,f}x_i)^2-\sum_{i\in I}(V_{i,f}x_i)^2] + b
	\end{equation}
	where $I = \left\{0\right\}\cup\left\{x_i \not= 0,i=1,..,d\right\}$. In this way, the time complexity can be reduced to $\mathcal{O}(k\cdot card(I))$, where $card(\cdot)$ denotes the cardinality of a set.
	
	\subsection{Learning SHFMs and SH$\mathcal{A}^2$}
	Because SHFMs is a special case of SH$\mathcal{A}^2$, we focus on learning SH$\mathcal{A}^2$. 
	We only discuss how to learn $\boldsymbol{V}^{\prime}$ as estimating the bias $b$ and the weight vector for each latent dimension $\boldsymbol{\beta}$ is trivial compared to $\boldsymbol{V}^{\prime}$.
	%
	For the proposed models, given any loss function $\mathcal{L}(y,\hat{y}_{SH\mathcal{A}^2})$ convex in predicted label $\hat{y}$, we show that it is also convex along each element of $\boldsymbol{V}^{\prime}$.
	This is done through the demonstration that the model equation of our proposed models is affine to each row of the factorized parameter matrix (Proposition~\ref{clm:affine}).
	\begin{theorem}\label{theo:multiconvex}
		The loss function $\mathcal{L}(y_i,\hat{y}_{SH\mathcal{A}^2}(\boldsymbol{x}))$ is convex in each element of the factorized parameter matrix $\boldsymbol{V}^{\prime}$, assuming $\mathcal{L}(y,\hat{y})$ is convex in $\hat{y}$.
	\end{theorem}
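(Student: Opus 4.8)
The plan is to reduce everything to the elementary fact that the composition of a convex function with an affine function is convex, exactly as the sentence preceding the theorem suggests. Fix a feature vector $\boldsymbol{x}$, a feature index $p\in\{0,1,\dots,d\}$ and a latent coordinate $f\in\{1,\dots,k\}$, and treat every entry of $\boldsymbol{V}'$ other than $\boldsymbol{V}'_{p,f}$, together with $b$ and $\boldsymbol{\beta}$, as fixed constants. I want to show that $\boldsymbol{V}'_{p,f}\mapsto\mathcal{L}\big(y_i,\hat{y}_{SH\mathcal{A}^2}(\boldsymbol{x})\big)$ is a convex function of the single real variable $\boldsymbol{V}'_{p,f}$.

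The first and main step is Proposition~\ref{clm:affine}: $\hat{y}_{SH\mathcal{A}^2}(\boldsymbol{x})$ is affine in each row $\boldsymbol{v}_p$ of $\boldsymbol{V}'$. Starting from the first line of (\ref{eq:A2SH}), I would split the double sum $\sum_{i=0}^{d}\sum_{j=i+1}^{d}\langle\boldsymbol{v}_i\odot\boldsymbol{\beta},\boldsymbol{v}_j\rangle x_ix_j$ into the terms that contain $\boldsymbol{v}_p$ — those indexed by $i=p$ or $j=p$ — and those that do not. Because the inner summation is over $j>i$ strictly, the index $p$ cannot occur as both $i$ and $j$ in a single term, so every surviving $\boldsymbol{v}_p$-term has the shape $\langle\boldsymbol{v}_p\odot\boldsymbol{\beta},\boldsymbol{v}_j\rangle x_p x_j$ or $\langle\boldsymbol{v}_i\odot\boldsymbol{\beta},\boldsymbol{v}_p\rangle x_i x_p$ with the other latent factor frozen, hence is linear in $\boldsymbol{v}_p$; the remaining terms together with $b$ collect into a constant. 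Therefore there exist $\boldsymbol{a}_p\in\mathds{R}^k$ and $c_p\in\mathds{R}$, depending only on $\boldsymbol{x}$, $b$, $\boldsymbol{\beta}$ and the rows $\boldsymbol{v}_q$ with $q\ne p$, such that $\hat{y}_{SH\mathcal{A}^2}(\boldsymbol{x})=\langle\boldsymbol{a}_p,\boldsymbol{v}_p\rangle+c_p$. The context row $p=0$ is no exception, since $x_0$ is simply the constant $1$; and the SHFMs case is the specialization $\boldsymbol{\beta}=\boldsymbol{1}$, which changes nothing in the argument.

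Given this, restricting the affine map $\boldsymbol{v}_p\mapsto\langle\boldsymbol{a}_p,\boldsymbol{v}_p\rangle+c_p$ to the coordinate $\boldsymbol{V}'_{p,f}$ (all other coordinates of $\boldsymbol{v}_p$ held fixed) yields $\hat{y}_{SH\mathcal{A}^2}(\boldsymbol{x}) = a_{p,f}\,\boldsymbol{V}'_{p,f}+\text{const}$, which is affine, hence convex, in the scalar $\boldsymbol{V}'_{p,f}$. Composing with $\mathcal{L}(y_i,\cdot)$, which is convex by hypothesis, and invoking the convex$\,\circ\,$affine rule shows $\mathcal{L}\big(y_i,\hat{y}_{SH\mathcal{A}^2}(\boldsymbol{x})\big)$ is convex in $\boldsymbol{V}'_{p,f}$. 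Since $p$ and $f$ were arbitrary, this is precisely element-wise convexity, and since a sum of functions convex in $\boldsymbol{V}'_{p,f}$ is again convex, the same conclusion extends to the empirical risk over a training set.

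The step that really carries the content — and the one I would be careful to state explicitly — is the observation that no term of the form $\langle\boldsymbol{v}_p\odot\boldsymbol{\beta},\boldsymbol{v}_p\rangle$ ever appears, so that the prediction has no quadratic self-dependence on any $\boldsymbol{v}_p$. This is exactly what the strict inequality $j>i$ in the ANOVA kernel guarantees, and it is the structural reason why FM/ANOVA-style factorizations are convex row-by-row (and hence coordinate-by-coordinate) whereas a naive polynomial parametrization would not be; everything else is bookkeeping.
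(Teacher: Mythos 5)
Your proof is correct and follows essentially the same route as the paper: establish that $\hat{y}_{SH\mathcal{A}^2}$ is affine in each row $\boldsymbol{v}_p$ (the paper's Proposition~\ref{clm:affine}), then invoke the convex-composed-with-affine rule. The only cosmetic difference is that you derive the row-wise affineness directly from the strict inequality $j>i$ in the double sum, whereas the paper cites the multi-linearity identity of the ANOVA kernel from Blondel et al.\ --- these are the same fact, so no substantive gap or divergence exists.
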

	We start with demonstration of Proposition~\ref{clm:affine}, which enables us to prove Theorem~\ref{theo:multiconvex} later.
	%
	\begin{proposition}
		$\hat{y}_{SH\mathcal{A}^2}$ is an affine function of $\boldsymbol{v}_i, i=0,1,...,d$.
		\label{clm:affine}
	\end{proposition}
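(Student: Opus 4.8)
The plan is to fix an arbitrary index $p\in\{0,1,\dots,d\}$, hold every other latent vector $\{\boldsymbol{v}_j:j\neq p\}$ as well as $\boldsymbol{\beta}$ and $b$ fixed, and read off the dependence of the right-hand side of (\ref{eq:A2SH}) on $\boldsymbol{v}_p$ by splitting the double sum $\sum_{i=0}^{d}\sum_{j=i+1}^{d}$ into the summands that contain the index $p$ and those that do not. Every summand is a product $\langle\boldsymbol{v}_i\odot\boldsymbol{\beta},\boldsymbol{v}_j\rangle x_ix_j$ with $i<j$, so the crucial observation is that no single summand can have $i=j=p$; consequently $\boldsymbol{v}_p$ never appears twice in the same term and no contribution quadratic in $\boldsymbol{v}_p$ can arise. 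The summands with neither $i$ nor $j$ equal to $p$ are constant with respect to $\boldsymbol{v}_p$; I collect their sum, together with $b$, into a scalar $c_p$. The remaining summands are those with $i=p$, namely $\sum_{j>p}\langle\boldsymbol{v}_p\odot\boldsymbol{\beta},\boldsymbol{v}_j\rangle x_px_j$, and those with $j=p$, namely $\sum_{i<p}\langle\boldsymbol{v}_i\odot\boldsymbol{\beta},\boldsymbol{v}_p\rangle x_ix_p$; using $\langle\boldsymbol{v}_i\odot\boldsymbol{\beta},\boldsymbol{v}_p\rangle=\langle\boldsymbol{v}_p\odot\boldsymbol{\beta},\boldsymbol{v}_i\rangle$, both families are linear in the entries of $\boldsymbol{v}_p$, because the Hadamard product $\odot$ and the inner product $\langle\cdot,\cdot\rangle$ are each linear in their $\boldsymbol{v}_p$ argument when the other operand and $\boldsymbol{\beta}$ are held fixed.

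Collecting the pieces I would write $\hat{y}_{SH\mathcal{A}^2}(\boldsymbol{x})=\langle\boldsymbol{a}_p,\boldsymbol{v}_p\rangle+c_p$ with $\boldsymbol{a}_p=x_p\,\boldsymbol{\beta}\odot\big(\sum_{j\neq p}x_j\boldsymbol{v}_j\big)$ and $c_p=b+\sum_{0\le i<j\le d,\ i\neq p,\ j\neq p}\langle\boldsymbol{v}_i\odot\boldsymbol{\beta},\boldsymbol{v}_j\rangle x_ix_j$, where both $\boldsymbol{a}_p$ and $c_p$ depend only on $\boldsymbol{x}$, $\boldsymbol{\beta}$, $b$ and $\{\boldsymbol{v}_j:j\neq p\}$ and not on $\boldsymbol{v}_p$. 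Since a map of the form $\boldsymbol{v}_p\mapsto\langle\boldsymbol{a}_p,\boldsymbol{v}_p\rangle+c_p$ is by definition affine, this establishes the proposition for the index $p$; as $p$ was arbitrary, it holds for all $i=0,1,\dots,d$. For SHFMs one simply substitutes $\boldsymbol{\beta}=\boldsymbol{1}$ and nothing in the argument changes.

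The argument is essentially bookkeeping, so the only point requiring real care — and the one place where the claim would fail for a superficially similar model — is the index split: one must be sure that the strict inequality $j>i$ in (\ref{eq:A2SH}) rules out a diagonal term $\langle\boldsymbol{v}_p\odot\boldsymbol{\beta},\boldsymbol{v}_p\rangle x_p^2$, which would be genuinely quadratic in $\boldsymbol{v}_p$ and would destroy affinity; this is precisely why the ANOVA-kernel parametrization, rather than plain polynomial regression with a full $\boldsymbol{W}$, is used here. Once Proposition~\ref{clm:affine} is available, Theorem~\ref{theo:multiconvex} is immediate: restricting to a single coordinate $\boldsymbol{V}^{\prime}_{p,f}$, the map $\boldsymbol{V}^{\prime}_{p,f}\mapsto\hat{y}_{SH\mathcal{A}^2}(\boldsymbol{x})$ is affine, so $\mathcal{L}(y,\hat{y}_{SH\mathcal{A}^2}(\boldsymbol{x}))$ is the composition of the convex function $\hat{y}\mapsto\mathcal{L}(y,\hat{y})$ with an affine function and is therefore convex in $\boldsymbol{V}^{\prime}_{p,f}$.
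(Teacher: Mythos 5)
Your proof is correct and reaches the same conclusion by essentially the same mechanism: the prediction is degree one in each latent vector $\boldsymbol{v}_p$ because no summand of (\ref{eq:A2SH}) contains $\boldsymbol{v}_p$ twice, so it collapses to $\langle\boldsymbol{a}_p,\boldsymbol{v}_p\rangle+c_p$ — and your coefficient $\boldsymbol{a}_p=x_p\,\boldsymbol{\beta}\odot\bigl(\sum_{j\neq p}x_j\boldsymbol{v}_j\bigr)$ matches the paper's $x_i\boldsymbol{a}\odot\boldsymbol{\beta}$ with $\sum_{j\neq p}x_jV_{j,f}=\mathcal{A}^1(\boldsymbol{V}_{\neg p,f}^{\prime},\boldsymbol{x}_{\neg p}^{\prime})$. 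The only difference is presentational: the paper invokes the multi-linearity identity of the ANOVA kernel from Blondel et al.\ as a black box (which would also cover higher-order kernels), whereas you verify the second-order case directly by splitting the double sum at the strict inequality $j>i$; your version is more self-contained and makes explicit exactly where affinity would break (a diagonal term), at the cost of not generalizing as cleanly to $\mathcal{A}^m$ for $m>2$.
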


	\begin{proof}
		Considering $\boldsymbol{V}^{\prime}$ as variable, 
		we analyze interaction effects  $\sum_{f=1}^k\beta_{f}\mathcal{A}^2(\boldsymbol{V}_{:,f}^{\prime},\boldsymbol{x}^{\prime})$.
		In~\cite{blondel2016polynomial}, Blondel et al. concluded that multi-linearity is a key property of ANOVA kernel, which can be written as:
		\begin{equation*}
		\small
		\mathcal{A}^m(\boldsymbol{V}_{:,f}^{\prime},\boldsymbol{x}^{\prime}) = V_{i,f}x_i\mathcal{A}^m(\boldsymbol{V}_{\neg i,f}^{\prime},\boldsymbol{x}_{\neg i}^{\prime}) + \mathcal{A}^{m-1}(\boldsymbol{V}_{\neg i,f}^{\prime},\boldsymbol{x}_{\neg i}^{\prime})
		\end{equation*}
		where $i\in\left\{0,1,...,d\right\}$ and $\neg i=\left\{0,1,...,d\right\} \setminus i$.
		With this property, by only considering $\boldsymbol{v}_{i}$ ($i$th row of the factorized parameter matrix for interaction effects) as a variable while other rows as constants we analyze the second term on right-hand side of (\ref{eq:A2SH}):
		\begin{equation*}
		\small
		\begin{split}
		& \sum_{f=1}^k\beta_{f}\mathcal{A}^2(\boldsymbol{V}_{:,f}^{\prime},\boldsymbol{x}^{\prime}) \\ & = \sum_{f=1}^k\beta_f[ V_{i,f}x_i\mathcal{A}^2(\boldsymbol{V}_{\neg i,f}^{\prime},\boldsymbol{x}_{\neg i}^{\prime}) \; +  \mathcal{A}^{1}(\boldsymbol{V}_{\neg i,f}^{\prime},\boldsymbol{x}_{\neg i}^{\prime})] \\
		& = \sum_{f=1}^{k}\beta_f(a_fV_{i,f}x_i+b_f)  = \langle\boldsymbol{v}_i,x_i\boldsymbol{a}\odot \boldsymbol{\beta}\rangle + \langle \boldsymbol{\beta},\boldsymbol{b} \rangle
		\end{split}
		\end{equation*}
		where $a_f=\mathcal{A}^2(\boldsymbol{V}_{\neg i,f}^{\prime},\boldsymbol{x}_{\neg i}^{\prime})$ and $b_f= \mathcal{A}^{1}(\boldsymbol{V}_{\neg i,f}^{\prime},\boldsymbol{x}_{\neg i}^{\prime})$ are constants.
		%
		%
		Therefore, $x_i\boldsymbol{a} \odot\boldsymbol{b}$ and $\langle \boldsymbol{\beta},\boldsymbol{b} \rangle$ are both constants, and $\hat{y}_{SH\mathcal{A}^2}$ is an affine function for $\boldsymbol{v}_i, i=0,1,...,d$. This completes the proof of Proposition~\ref{clm:affine}.
	\end{proof}
	\begin{proof}
	As assumed, the loss function $\mathcal{L}(y,\hat{y})$ is convex in $\hat{y}$ (e.g. mean squared error, sigmoid cross entropy etc.).
	Then, according to Proposition~\ref{clm:affine}, we know that $\hat{y}_{SH\mathcal{A}^2}$ is affine in each row of the factorized parameter matrix, and thus also affine in each element ($V_{i,f}$).
	Hence, the loss function $\mathcal{L}(y_i,\hat{y}_{SH\mathcal{A}^2}(\boldsymbol{x};b,\boldsymbol{V}^{\prime}))$ is a composite of convex and affine functions in every $V_{i,f}$, which implies that $\mathcal{L}(y_i,\hat{y}_{SH\mathcal{A}^2}(\boldsymbol{x};b,\boldsymbol{V}^{\prime}))$ is convex in each element of the factorized parameter matrix $\boldsymbol{V}^{\prime}$.
	This completes the proof. 
	\end{proof}
	With Theorem~\ref{theo:multiconvex}, we conclude that the loss function can be optimized efficiently with per-coordinate algorithms. 
	%
	Here, we derive Algorithm~\ref{algo:1} to estimate $\boldsymbol{V}^{\prime}$ for SH$\mathcal{A}^2$ based on the FTRL-Proximal algorithm~\cite{mcmahan2011follow,mcmahan2013ad}.
	%
	
	\noindent\textit{FTRL-Proximal for SH$\mathcal{A}^2$.}
	We first derive the Per-coordinate FTRL-Proximal Algorithm with $L_1$ and $L_2$ Regularization for SH$\mathcal{A}^2$.
	As the $t$th sample received by the model, the algorithm plays the following implicit update:
	\begin{equation}
	\small
	\begin{split}
	\boldsymbol{V}^{\prime t+1} = \underset{\boldsymbol{V}^{\prime}}\argmin(\frac{1}{2}\sum_{s=1}^{t}\sigma^s||\boldsymbol{V}^{\prime}-\boldsymbol{V}^{\prime  s}||_2^2+\\ \boldsymbol{G}^{1:t}\odot\boldsymbol{V}^{\prime}+\lambda_1||\boldsymbol{V}^{\prime}||_1+\frac{\lambda_2}{2}||\boldsymbol{V}^{\prime}||_2^2) \\
	\end{split}
	\label{eq:ftrlstep}
	\end{equation}
	where $\boldsymbol{G}^{1:t}=\sum_{s=1}^t\frac{\partial\mathcal{L}(y_i,\hat{y}_{\mathcal{A}^2SH}(\boldsymbol{x}^{\prime s};b,\boldsymbol{V}^{\prime}))}{\partial\boldsymbol{V}^{\prime}}\bigr\vert_{\boldsymbol{V}^{\prime}=\boldsymbol{V}^{\prime t}}$, $\sum_{s=1}^t\sigma^s = \frac{1}{\eta^t}$ and $\eta^t$ is the non-increasing learning rate, we introduce how to compute them later in this section.
	Here, we state and prove Theorem~\ref{theorem:solution} as below. Theorem~\ref{theorem:solution} is an important property of the FTRL-Proximal algorithm to show its efficiency. 
	\begin{theorem}
		\label{theorem:solution}
		In the unconstrained minimization problem of (\ref{eq:ftrlstep}), each factorized parameter $V_{i,f}^{t+1}$ has a closed form solution. 
	\end{theorem}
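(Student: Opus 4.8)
The plan is to exploit the fact that every term in the objective of (\ref{eq:ftrlstep}) is separable over the entries of $\boldsymbol{V}^{\prime}$, so that the joint minimization over the $d\times k$ matrix decouples into $d\times k$ independent one-dimensional convex programs, each of which I will solve in closed form by the standard soft-thresholding argument used for FTRL-Proximal~\cite{mcmahan2011follow,mcmahan2013ad}. Here $\boldsymbol{G}^{1:t}\odot\boldsymbol{V}^{\prime}$ is read as the entrywise inner product $\langle\boldsymbol{G}^{1:t},\boldsymbol{V}^{\prime}\rangle$.

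First I would fix a coordinate $(i,f)$, write $v = V_{i,f}^{\prime}$, and collect the terms of (\ref{eq:ftrlstep}) that depend on $v$: the proximal term contributes $\frac{1}{2}\bigl(\sum_{s=1}^{t}\sigma^s\bigr)v^2-\bigl(\sum_{s=1}^{t}\sigma^s V_{i,f}^{\prime s}\bigr)v$ up to a $v$-free constant, the linear term contributes $G_{i,f}^{1:t}v$, the $L_1$ term contributes $\lambda_1|v|$, and the $L_2$ term contributes $\frac{\lambda_2}{2}v^2$. Introducing the scalar $z_{i,f}^t := G_{i,f}^{1:t}-\sum_{s=1}^{t}\sigma^s V_{i,f}^{\prime s}$ and using $\sum_{s=1}^{t}\sigma^s = 1/\eta^t$, the per-coordinate subproblem becomes
\[
\min_{v\in\mathds{R}}\;\tfrac12\Bigl(\tfrac{1}{\eta^t}+\lambda_2\Bigr)v^2 + z_{i,f}^t v + \lambda_1|v|.
\]
Since $1/\eta^t+\lambda_2>0$, this objective is strictly convex and coercive, so it has a unique minimizer. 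Writing the first-order optimality condition with the subdifferential of $|\cdot|$, namely $0\in(1/\eta^t+\lambda_2)\,v+z_{i,f}^t+\lambda_1\,\partial|v|$, and performing a short case split on the sign and magnitude of $z_{i,f}^t$ yields the claimed closed form
\[
V_{i,f}^{\prime t+1}=\begin{cases}0,& |z_{i,f}^t|\le\lambda_1,\\[4pt] -\dfrac{z_{i,f}^t-\lambda_1\,\mathrm{sgn}(z_{i,f}^t)}{1/\eta^t+\lambda_2},&\text{otherwise.}\end{cases}
\]

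The step I expect to require the most care is justifying the coordinate-wise decoupling cleanly: one has to check that the proximal term $\frac12\sum_{s}\sigma^s\|\boldsymbol{V}^{\prime}-\boldsymbol{V}^{\prime s}\|_2^2$ — the feature distinguishing FTRL-Proximal from plain FTRL — really does split as a sum over entries (it does, since $\|\cdot\|_2^2$ is a sum of squares), and that its cross-terms involve only the already-observed iterates $\boldsymbol{V}^{\prime 1},\dots,\boldsymbol{V}^{\prime t}$, so that $z_{i,f}^t$ is a genuine constant in the subproblem and can moreover be maintained incrementally across iterations. The other point to get right is to handle the nonsmoothness of $|v|$ at the origin via the subdifferential inclusion rather than a naive derivative; once that is set up, the case analysis is routine, and the bias $b$ and the vector $\boldsymbol{\beta}$ (treated separately, as noted after Theorem~\ref{theo:multiconvex}) do not interact with this update.
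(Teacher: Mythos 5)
Your proposal is correct and follows essentially the same route as the paper: both decouple the objective of (\ref{eq:ftrlstep}) into independent per-coordinate convex problems, define the same accumulator $z_{i,f}^t=\sum_{s=1}^{t}(g_{i,f}^s-\sigma^s V_{i,f}^{\prime s})$, and arrive at the identical soft-thresholding update (\ref{eq:updateV}). The only cosmetic difference is that the paper completes the square to match the soft-thresholding operator of~\cite{donoho1995adapting} explicitly, whereas you derive the same case split from the subdifferential optimality condition.
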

	\begin{proof}
	To solve Eq.~\ref{eq:ftrlstep} for each $V_{i,f}^{t+1}$ in closed form, we reformulate it as:
	\begin{equation}
	\small
	\begin{split}
	V_{i,f}^{t+1} = \underset{V_{i,f}}\argmin(\frac{1}{\eta_{i,f}^t}+\lambda_2)[\frac{1}{2}(V_{i,f}+(\frac{1}{\eta_{i,f}^t}+\lambda_2)^{-1}z_{i,f}^t)^2 \\ + (\frac{1}{\eta_{i,f}^t}+\lambda_2)^{-1}\lambda_1|V_{i,f}|] + const \\
	\end{split}
	\label{eq:ftrlstep1}
	\end{equation}
	where $z_{i,f}^t = \sum_{s=1}^t(g_{i,f}^s - \sigma^sv_{i,f}^s)$, $\eta_{i,f}^t$ is a hyper-parameter, namely the per-coordinate learning rate. 
	So right-hand side of Eq.~\ref{eq:ftrlstep1} matches the form of the soft-thresholding operator~\cite{donoho1995adapting}:
	\begin{equation*}
	\small
	\omega^* = \underset{\omega}\argmin\frac{1}{2}(x-\omega)^2+\lambda|\omega| =\begin{cases}
	0 & |x| \le \lambda \\
	x(1-\frac{\lambda}{|x|}) & otherwise \\
	\end{cases}
	\end{equation*}
	With $x = -(\frac{1}{\eta_{i,f}^t}+\lambda_2)^{-1}z_{i,f}^t$, $\lambda = (\frac{1}{\eta_{i,f}^t}+\lambda_2 )^{-1}\lambda_1$ and the fact that $\eta_{i,f}^t >0$, $\lambda_1\ge0$ and $\lambda_2 \ge0$ we have: 
	\begin{equation}
	\small
	V_{i,f}^{t+1} = \begin{cases}
	0 & |z_{i,f}^t| \le \lambda_1 \\
	\frac{(\lambda_1sgn(z_{i,f}^t)-z_{i,f}^t)}{(\frac{1}{\eta_{i,f}^t}+\lambda_2)} & otherwise
	\end{cases}
	\label{eq:updateV}
	\end{equation}
	where $sgn(x) = 1$ for $x\ge0$, $sgn(x)=-1$ for $x<0$.
	With (\ref{eq:updateV}), the proof is completed.
	\end{proof}
	Using the chain rule, the gradient is computed as:
	\begin{equation}
	\small
	\begin{split}
	&g_{i,f}^t = \frac{\partial\mathcal{L}(y^t,\hat{y}_{\mathcal{A}^2SH}(\boldsymbol{x};b,\boldsymbol{V}^{\prime }))}{\partial V_{i,f}}\Bigr|_{\boldsymbol{x}=\boldsymbol{x}^t, \boldsymbol{V}^{\prime}=\boldsymbol{V}^{\prime t}}\\ &=\frac{\partial\mathcal{L}(y^t,\hat{y}_{\mathcal{A}^2SH})}{\partial\hat{y}_{\mathcal{A}^2SH}}\frac{\partial\hat{y}_{\mathcal{A}^2SH}}{\partial V_{i,f}}\Bigr|_{\boldsymbol{x}=\boldsymbol{x}^t, \boldsymbol{V}^{\prime}=\boldsymbol{V}^{\prime t}} \\ &= \frac{\partial\mathcal{L}(y^t,\hat{y}_{\mathcal{A}^2SH})}{\partial\hat{y}_{\mathcal{A}^2SH}}\Bigr|_{\boldsymbol{x}=\boldsymbol{x}^t, \boldsymbol{V}^{\prime}=\boldsymbol{V}^{\prime t}}(\beta_fx_i^t\langle \boldsymbol{V}_{\neg i,f}^{\prime t},\boldsymbol{x}_{\neg i}^t\rangle)
	\end{split}
	\label{eq:grad}
	\end{equation}
	As shown in (\ref{eq:updateV}), $\eta_{i,f}^t$ plays the role of per-coordinate learning rate which controls the magnitude of $V_{i,f}^{t+1}$.
	Following~\cite{mcmahan2013ad}, with positive hyper-parameters $\alpha$, $\mu$ and $\gamma$, we set it as:
	$\eta_{i,f}^t = \frac{\alpha}{(\mu+\sum_{s=1}^t(g_{i,f}^t)^2)^{\gamma}}
	\label{eq:lr}$.
	Therefore, the same initial learning rate $\eta_{i,f}^0=\frac{\alpha}{\mu^\gamma}$ is used for each co-ordinate.
	According to (\ref{eq:grad}), $g^t_{i,f}=0$ when $x_i^t=0$, which means that if the $i$th feature does not occur, then the gradient of each element in $\boldsymbol{v}_i$ is zero.
	Then, with the sum of squared gradients in denominator of $\eta_{i,f}^t$, the more $i$th feature is found in training samples $\boldsymbol{x}^s (s\le t)$, the smaller $\eta_{i,f}^t$ is likely to be.
	According to~\cite{li2016difacto}, this imposes \textit{Frequency Adaptive Regularization} (FAR) to the learning process. FAR refers to applying larger learning rate on parameters corresponding to infrequent features.
	The FAR method has been shown to be effective to improve model's generalized error.
	%
	%
	%
	To explain FAR by our running example, we can state that if an AI has already been familiar with the retweet time pattern of users who describe themselves as \textit{data mining researcher} or tweets related to \textit{SDM}, it does not demand drastic changes for further observation with these two features.
	%
	\begin{algorithm}[t!]
		\small
		\caption{FTRL-Proximal for SH$\mathcal{A}^2$}
		\label{algo:1}
		\begin{algorithmic}[1]
			\renewcommand{\algorithmicrequire}{\textbf{Input:}}
			\renewcommand{\algorithmicensure}{\textbf{Output:}}
			\REQUIRE $\alpha$, $\beta$, $\lambda_1$, $\lambda_2$
			\ENSURE  $\boldsymbol{V}^{\prime}$
			\\ \textit{Init} : $\boldsymbol{Z}=\boldsymbol{0} \in \mathds{R}^{(d+1)\times k}$, $\boldsymbol{N}=\boldsymbol{0} \in \mathds{R}^{(d+1)\times k}$
			\FOR {$t = 1$ to $T$}
			\STATE Receive the sample $\boldsymbol{x}^t$, concatenate it as $\boldsymbol{x}^{\prime t} = [x_0,\boldsymbol{x}^t]$, let $I=\left\{0\right\}\cup \left\{i|x_i^t\not = 0,\; i=1,...,d \right\}$
			\FOR {$f \in \left\{1,2,...,k\right\}$}
			\FOR {$i \in I$}
			\STATE Compute $V_{i,f}^t$ by (\ref{eq:updateV})
			\ENDFOR
			\ENDFOR
			\STATE Compute prediction $\hat{y}_{\mathcal{A}^2SH}^t$ by (\ref{eq:A2SH}) with $\boldsymbol{V}^{\prime t}$
			\STATE Observe label $y^t$
			\FOR {$f \in \left\{1,2,...,k\right\}$}
			\STATE Compute $\langle \boldsymbol{V}^{\prime t}_{:,f},\boldsymbol{x}^{\prime t}\rangle$
			\FOR {$i \in I$}
			\STATE Compute $g_{i,f}^t$ by (\ref{eq:grad}) 
			\STATE $\sigma_{i,f}^t = \frac{1}{\alpha}(\sqrt{n_{i,f}+(g_{i,f}^t)^2}-\sqrt{n_{i,f}})$
			\STATE $z_{i,f} \leftarrow z_{i,f} + g_{i,f}^t - \sigma_{i,f}^tV_{i,f}^t$
			\STATE $n_{i,f} \leftarrow n_{i,f}+(g_{i,f}^t)^2$
			\ENDFOR
			\ENDFOR
			\ENDFOR
		\end{algorithmic} 
	\end{algorithm}
	
	\noindent\textit{Analysis of Algorithm}
	Here we demonstrate the complexity of Algorithm~\ref{algo:1} by the following Proposition.
	\begin{proposition}
		\label{prop:kd2}
		The time and space complexity of Algorithm~\ref{algo:1} is $\mathcal{O}(k\cdot card(I))$ and $\mathcal{O}(dk)$, respectively.
	\end{proposition}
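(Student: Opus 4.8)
The plan is to bound the work Algorithm~\ref{algo:1} performs while processing a single sample $\boldsymbol{x}^t$ (the body of the outer \textbf{for} loop), since this per-iteration cost determines the stated bound. I would split the body into three blocks: (i) the coordinate updates producing $\boldsymbol{V}^{\prime t}$, (ii) the prediction $\hat{y}_{\mathcal{A}^2SH}^t$, and (iii) the gradient evaluation together with the updates of the accumulators $z_{i,f}$ and $n_{i,f}$, and show that each block costs $\mathcal{O}(k\cdot card(I))$, so that their sum does too.

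For block (i), Theorem~\ref{theorem:solution} gives $V_{i,f}^{t+1}$ in the closed form (\ref{eq:updateV}), which uses only $z_{i,f}^t$, $n_{i,f}$ and the fixed hyper-parameters; each coordinate therefore costs $\mathcal{O}(1)$, and running over $f\in\{1,\dots,k\}$ and $i\in I$ gives $\mathcal{O}(k\cdot card(I))$. Only the coordinates with $i\in I$ are touched because, as already observed after (\ref{eq:A2SH}), a feature $x_i^t=0$ contributes nothing. For block (ii), the prediction is evaluated through the sparse reformulation (\ref{eq:sparseA2SH}): for each $f$ one forms $\sum_{i\in I}V_{i,f}x_i$ and $\sum_{i\in I}(V_{i,f}x_i)^2$ in $\mathcal{O}(card(I))$ time, so the whole sum over $f$ is $\mathcal{O}(k\cdot card(I))$; this is exactly Proposition~\ref{prop:kd} restricted to the nonzero coordinates.

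Block (iii) is where the argument needs care, and is the main obstacle: read literally, (\ref{eq:grad}) asks for the inner product $\langle\boldsymbol{V}_{\neg i,f}^{\prime t},\boldsymbol{x}_{\neg i}^t\rangle$ for every $i\in I$, which done naively costs $\mathcal{O}(card(I)^2)$ per latent dimension. This is avoided exactly as in the proof of Proposition~\ref{prop:kd}, and the algorithm already builds it in: for each $f$ it first caches $s_f:=\langle\boldsymbol{V}_{:,f}^{\prime t},\boldsymbol{x}^{\prime t}\rangle$ in $\mathcal{O}(card(I))$ time, and then for every $i\in I$ recovers $\langle\boldsymbol{V}_{\neg i,f}^{\prime t},\boldsymbol{x}_{\neg i}^t\rangle = s_f - V_{i,f}^t x_i^t$ in $\mathcal{O}(1)$. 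The scalar $\partial\mathcal{L}/\partial\hat{y}$ is computed once, independently of $i$ and $f$, so each $g_{i,f}^t$ then costs $\mathcal{O}(1)$, and the subsequent updates of $\sigma_{i,f}^t$, $z_{i,f}$ and $n_{i,f}$ are $\mathcal{O}(1)$ per coordinate as well. Summing the caching step and the per-coordinate work over $f$ and $i$ gives $\mathcal{O}(k\cdot card(I))$, and adding the three blocks leaves the per-iteration time at $\mathcal{O}(k\cdot card(I))$ (hence $\mathcal{O}(kd)$ in the worst case, since $card(I)\le d+1$).

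For the space bound, between iterations the algorithm retains only the accumulators $\boldsymbol{Z},\boldsymbol{N}\in\mathds{R}^{(d+1)\times k}$ and the parameter matrix $\boldsymbol{V}^{\prime}\in\mathds{R}^{(d+1)\times k}$, each of size $\mathcal{O}((d+1)k)=\mathcal{O}(dk)$; within a round the only additional storage is the $k$ cached values $s_f$ and a constant number of scalars, i.e.\ $\mathcal{O}(k)$, which is dominated. Hence the space complexity is $\mathcal{O}(dk)$, which completes the argument modulo the routine $\mathcal{O}(1)$ accountings indicated above.
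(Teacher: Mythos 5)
Your proposal is correct and follows essentially the same route as the paper's proof: a block-by-block accounting of the loop body showing each stage costs $\mathcal{O}(k\cdot card(I))$, with the space bound coming from storing $\boldsymbol{Z}$, $\boldsymbol{N}$ and $\boldsymbol{V}^{\prime}$. Your block (iii) merely makes explicit (via $\langle\boldsymbol{V}_{\neg i,f}^{\prime t},\boldsymbol{x}_{\neg i}^t\rangle = s_f - V_{i,f}^t x_i^t$) the caching of $\langle\boldsymbol{V}_{:,f}^{\prime t},\boldsymbol{x}^{\prime t}\rangle$ that the paper's proof attributes to line 11 of the algorithm.
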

	\begin{proof}
		First, we discuss the time complexity. For each iteration $t$, first, elements of $\boldsymbol{V}^{\prime t}$ are computed in line 5 by Eq.~\ref{eq:updateV} in $\mathcal{O}(k\cdot card(I))$. 
		As given $z_{i,f}$, $\eta_{i,f}^t$ and $\lambda_1$, it takes $\mathcal{O}(1)$ to compute each $V_{i,f}^t$.
		Then, (\ref{eq:A2SH}) in line 8 also takes $\mathcal{O}(k\cdot card(I))$ as shown in (\ref{eq:sparseA2SH}). 
		The algorithm computes $\langle \boldsymbol{V}^{\prime t}_{:,f},\boldsymbol{x}^{\prime t}\rangle$ for each $f$ in line 11, which overall takes $\mathcal{O}(k\cdot card(I))$. By doing this, we avoid repeating the computation of $\langle \boldsymbol{V}^{\prime t}_{:,f},\boldsymbol{x}^{\prime t}\rangle$ in inner loop for each $i \in I$.
		Finally, similar to line 5, 
		each line from line 13 to 16 takes $\mathcal{O}(k\cdot card(I))$ for all $f=1,...,k$ and $i\in I$.
		With these analysis, we conclude that the time complexity of Algorithm~\ref{algo:1} is $\mathcal{O}(7 Tk\cdot card(I))=\mathcal{O}(Tk\cdot card(I))$ with $T$ iterations.
		On the other hand, in terms of space complexity, the two matrices $\boldsymbol{Z}$ and $\boldsymbol{N}$ need to be stored in memory. Besides them, in each iteration, implicitly, it also requires to store $\boldsymbol{V}^{\prime t}$. So, the space complexity of Algorithm~\ref{algo:1} is $\mathcal{O}(3(d+1)k)=\mathcal{O}(dk)$.
	\end{proof}
	It is worthwhile to mention that the space complexity of Algorithm~\ref{algo:1} can not be reduced to $\mathcal{O}(k\cdot card(I))$ because elements in both $\boldsymbol{Z}$ and $\boldsymbol{N}$ accumulate impacts from $x_i^t ,\forall i \in I$ with $t=1,...,T$ where set $I$ can be different for each iteration. 	
	
\section{Experiments}
\label{sec:exp}
	In this section, we start with the dataset description for evaluating the proposed models. 
	Then, we describe our experimental setup.
	Finally, we report experimental results in two aspects: effectiveness, sensitivity analysis.
	
	\subsection{Datasets}
	\label{subsec:data}
	In Table~\ref{tab:data}, we summarize statistics of datasets with three quantities: the dimensionality of data, the number of training and testing samples.
	\begin{table}[tbh!]%
		\small
		\renewcommand{\arraystretch}{1}
		\caption{\textmd{Statistics of datasets}}
		\label{tab:data}
		\centering
		\begin{tabular}{| c| c|c|c|c|}
			\hline
			Dataset &Dimension & Training & Testing   \\ \hline \hline
			WDYR (cold-start) & 24,025 & 78,738 &9,186   \\ \hline
			E2006-tfidf & 150,360 & 16,087 & 3,308  \\ \hline 
		\end{tabular}
	\end{table}

	\noindent\textit{WDYR.} We collect the When Do You Retweet (WDYR) dataset and share a subset of it online\footnote{goo.gl/7Anpkf}.
	The WDYR dataset is a collection of retweets posted from June to November in 2016 and related user profiles.
	The task is to predict how much time it takes for a certain user to retweet a particular original tweet.
	In this dataset, each sample represents a retweet labeled by $t_{r} - t_{0}$ where $t_r$ and $t_0$ are when the retweet and the original tweet was posted, respectively. For each user, we only consider her earliest retweet, given a tweet. We categorize $t_{r} - t_{0}$ (in seconds) into five classes: $t_{r} - t_{0}\in (0,10^3)$, $t_{r} - t_{0}\in[10^3,10^4)$, $t_{r} - t_{0}\in[10^5,10^6)$ and $t_{r} - t_{0}\in(10^6,+\infty)$.
	A retweet can be considered as a result from interactions of the user and the original tweet.
	Thus, we concatenate the user profile features and those of the original tweet.
	User profile includes: user id, user description, create time, favorite count, followers count, friends count and tweet count. Tweet attributes are: original tweet id, original tweet time ($t_0$) and tweet text.
	We treat each attribute as a field and apply one-hot encoding to each field (See Appendix).
	The training set consists of retweets from the first $90\%$ of original tweets.
	Thus, the task can be interpreted as predicting $t_r-t_0$ for unseen original tweets whose retweets can only be in testing set.
	To the best of our knowledge, this is the first study of cold-start problem w.r.t. information diffusion.

	
	\noindent\textit{E2006-tfidf.} E2006-tfidf~\cite{kogan2009predicting} is a subset of the 10K corpus\footnote{http://www.cs.cmu.edu/\texttildelow ark/10K/} of reports from thousands of publicly traded companies in the United States. 
	The target is to predict logarithm scale of stock return volatility (log-volatility) which is often used in the industry of finance to measure risk. As log-volatility takes continuous values, this is a regression task.
	Features comprise tf-idf of unigrams and volatility in the past 12 months.
	
	\subsection{Experimental Setup}
	In the experiments, for each iteration, models are trained with mini-batches, then we evaluate the performance on the complete testing set.
	Generalized linear models are trained with the FTRL-Proximal algorithm for linear models~\cite{mcmahan2011follow,mcmahan2013ad}.
	Our proposed models are trained with Algorithm~\ref{algo:1}. At the same time, FMs and $\mathcal{A}^2$ are trained with a variant of Algorithm~\ref{algo:1} without the \textit{contextual dimension}.
	For hyper-parameters, grid search is carried out. The domain of grid search for each hyper-parameter is shown in Appendix.
%

	%
	\noindent\textit{Loss functions.}
	In the experiments for multi-class classification, the loss function we use is the softmax cross-entropy:
	$\mathcal{L}(y,\hat{y}) = -\sum_{i=1}^c z_i log_{2}(softmax(\hat{y}_i))$
	where $\hat{y}_i$ is the output of a model for the $i$th class, vector $\boldsymbol{z} \in \left\{0,1\right\}^c$ is the one-hot encoding of label $y$ and $softmax(\hat{y}_i) = \frac{exp(\hat{y}_i)}{\sum_{j=1}^cexp(\hat{y}_j)}$. 
	For regression tasks, we choose mean squared error (MSE) as the loss function:
	$\mathcal{L}(y,\hat{y}) =\frac{1}{2}(y-\hat{y})^2$.
	
	\noindent\textit{Evaluation metrics.}
	Micro-F1 and macro-F1 scores are used vis-a-vis evaluation of model performance on multi-class classification tasks.
	They are defined as harmonic mean of micro-average and macro-average of precision, recall respectively. They are formally defined as:
	$p_{micro} = \frac{\sum_{i=1}^cTP_i}{\sum_{i=1}^c(TP_i+FP_i)}, \; r_{micro} = \frac{\sum_{i=1}^cTP_i}{\sum_{i=1}^c(TP_i+FN_i)}, \; p_{macro} = \frac{1}{c}\sum_{i=1}^c\frac{TP_i}{TP_i+FP_i}, \; r_{macro} = \frac{1}{c}\sum_{i=1}^c\frac{TP_i}{TP_i+FN_i}$.
	where $TP_i$, $FN_i$ and $FP_i$ refer to true positives, false negatives, false positives for class $i$ and $c$ denotes the number of classes. For regression, we apply root mean squared error (RMSE) and mean absolute error (MAE) to measure the difference between prediction and ground truth.
		
	\noindent\textit{Baseline models.}
	In our experiments, we compare our proposed models, trained with Algorithm~\ref{algo:1} against generalized linear models (logistic regression for classification and linear regression for regression), FMs~\cite{rendle2010factorization} and $\mathcal{A}^2$~\cite{blondel2016polynomial} on real-world datasets.
	%
	%
	
	\subsection{Effectiveness Analysis}
	\label{subsec:effe}
	From this point on, experimental results are discussed.
	We start with effectiveness of each model in the prediction tasks for the two datasets mentioned in Section~\ref{subsec:data}.
	While showing performance of models for multiple epochs, we only compare the best performance of each model, which might not happen in the same epoch.
	
	\noindent\textit{WDYR.} To evaluate models with the WDYR dataset, we set hyper-parameters as follows: $\lambda_1=0.001$, $\lambda_2=0.1$, $k=10$, $\alpha=0.1$, $\mu=0.1$ and $\gamma=0.5$.
	Each model is trained for 10 epochs with batch size set to 16.
	As shown in Figure~\ref{fig:macro_WDYR}, for macro-F1 scores, SHFMs and SH$\mathcal{A}^2$ outperform FMs and $\mathcal{A}^2$ by $0.51\%$ and $1.00\%$, respectively.
	Figure~\ref{fig:micro_WDYR} shows that SH$\mathcal{A}^2$ is $0.74\%$ better than $\mathcal{A}^2$ w.r.t. micro-F1 score.
	These improvements are not trivial as $\mathcal{A}^2$ only outperforms the logistic regression by $0.24 \%$ in micro-F1 and $0.39 \%$ in macro-F1.
	Even though FMs can achieve performance comparable to that of SHFMs measured by micro-F1, we can conclude that our proposed models outperform their corresponding baselines for the task of cold-start prediction of WDYR.

	\noindent\textit{E2006-tfidf.}
	Regarding the E2006-tfidf dataset, hyper-parameters are set to: $\lambda_1=0.001$, $\lambda_2=0.001$, $k=10$, $\alpha=0.02$, $\mu=0.1$ and $\gamma=0.5$. 
	The training phase lasts 20 epochs for each model in a mini-batch style with batch size 64.
	As shown in Fig.~\ref{fig:rmse_E2006}, in terms of RMSE, SHFMs and SH$\mathcal{A}^2$ outperform FMs and $\mathcal{A}^2$ with $21.15\%$ and $23.32\%$ less error, respectively.
	Similarly, measured by MAE (see Fig.~\ref{fig:mae_E2006}), SHFMs result in $10.83\%$ less error than FMs and SH$\mathcal{A}^2$ leads to $11.99\%$ less error than $\mathcal{A}^2$.
	In brief, for the E2006-tfidf dataset, the experimental results manifest that our proposed hierarchical models outperform their counterparts with unstructured parameters.
	For the task of predicting stock return volatility, SHFMs again achieve the best generalized error.
		Similar to results in~\cite{blondel2016polynomial}, Figure~\ref{fig:perf_WDYR} shows that fixing $\boldsymbol{\beta}=\boldsymbol{1}$ leads to better predictions on testing sets than fitting them as parameters.
	The explanation for this could be that fitting the low-dimensional vector $\boldsymbol{\beta}$ increases the chance of overfitting.
	\begin{figure}[tbh!]
		\centering
		\begin{subfigure}[b]{.49\columnwidth}
			\centering
			\includegraphics[width=1\columnwidth]{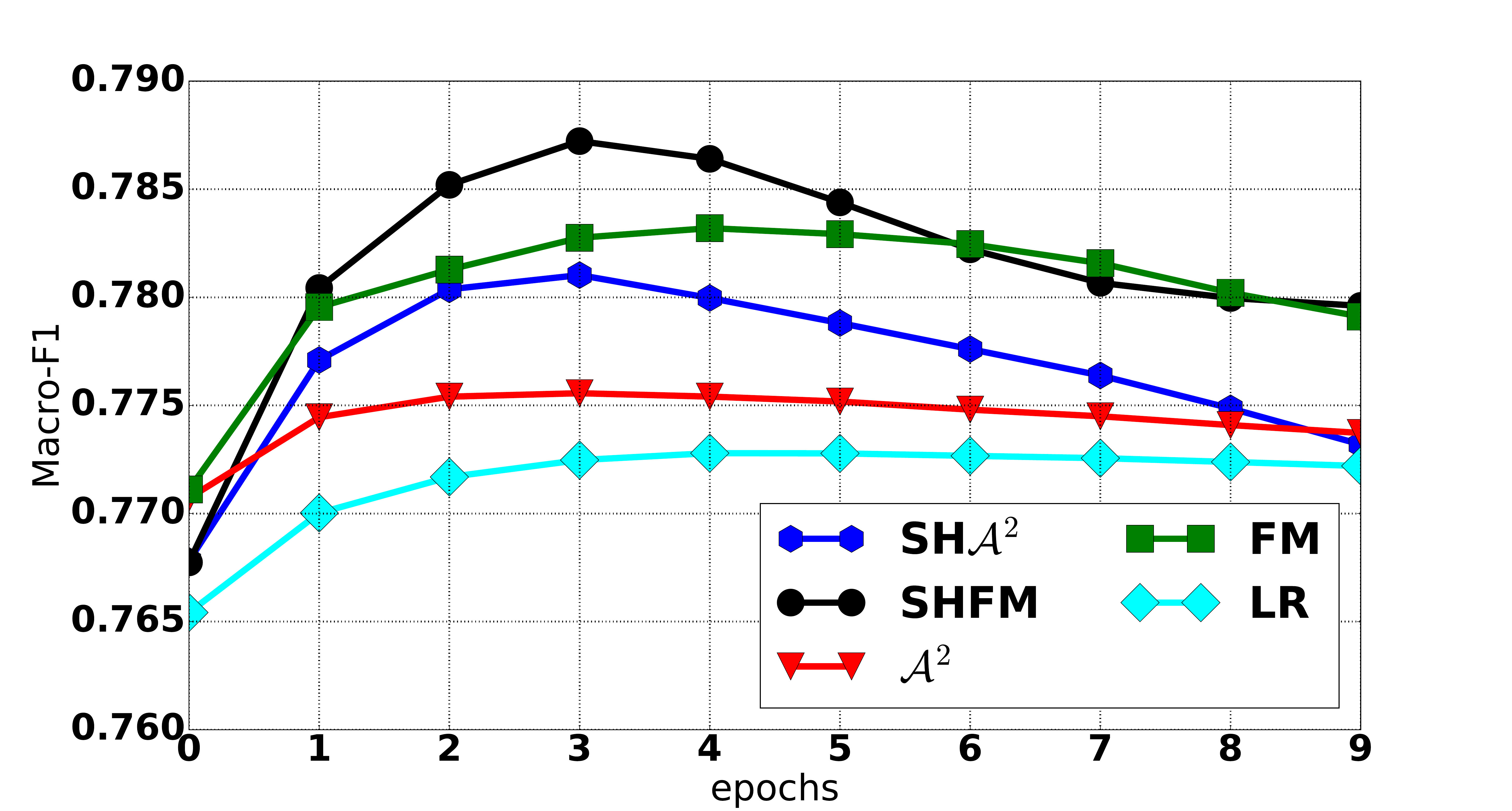}
			\caption{Macro-F1 (WDYR)}
			\label{fig:macro_WDYR}
		\end{subfigure}
	\hfill
		\begin{subfigure}[b]{.49\columnwidth}
			\centering
			\includegraphics[width=1\columnwidth]{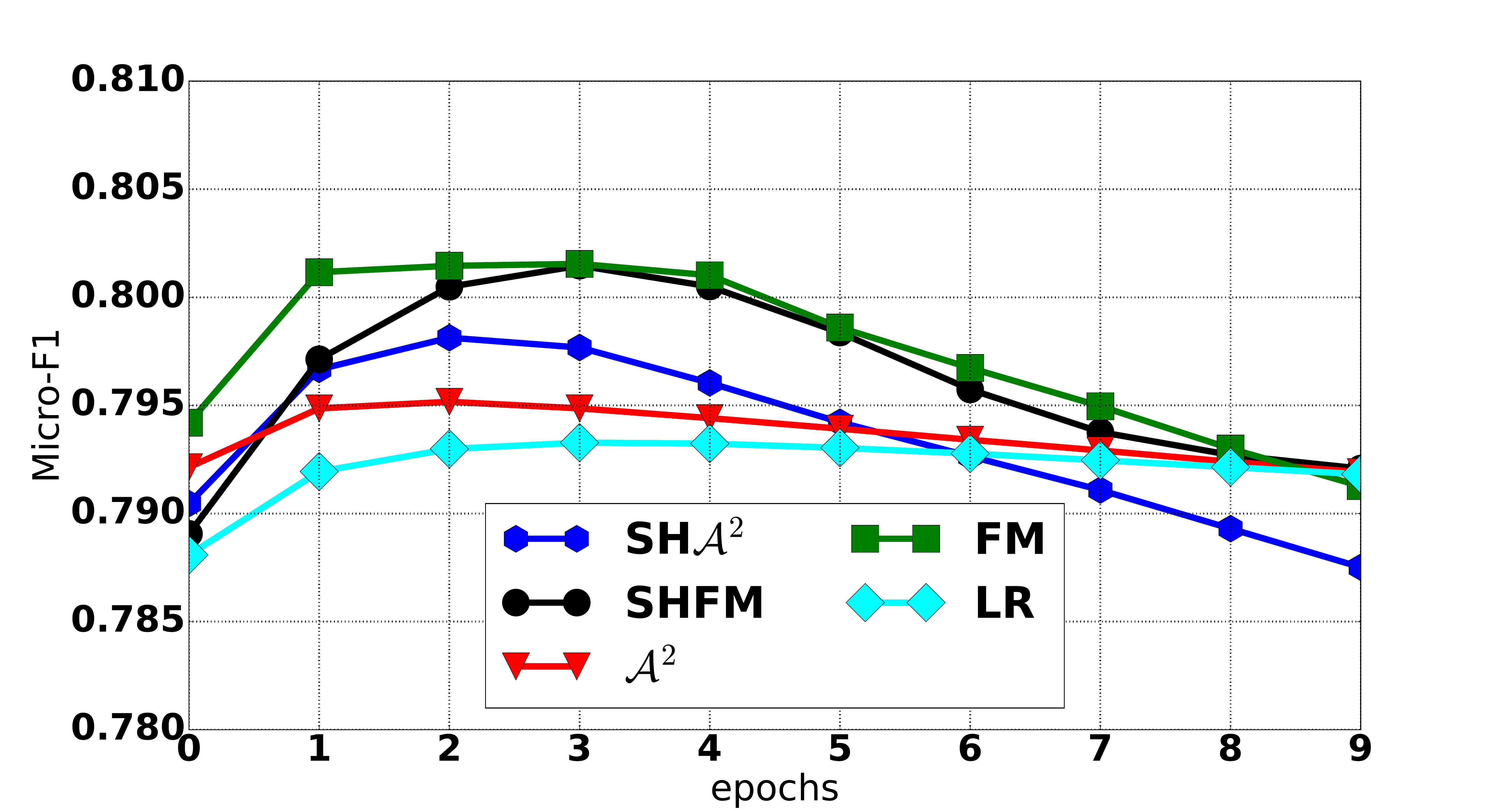}
			\caption{Micro-F1 (WDYR)}
			\label{fig:micro_WDYR}
		\end{subfigure}
		
%
		\begin{subfigure}[b]{.49\columnwidth}
			\centering
			\includegraphics[width=1\columnwidth]{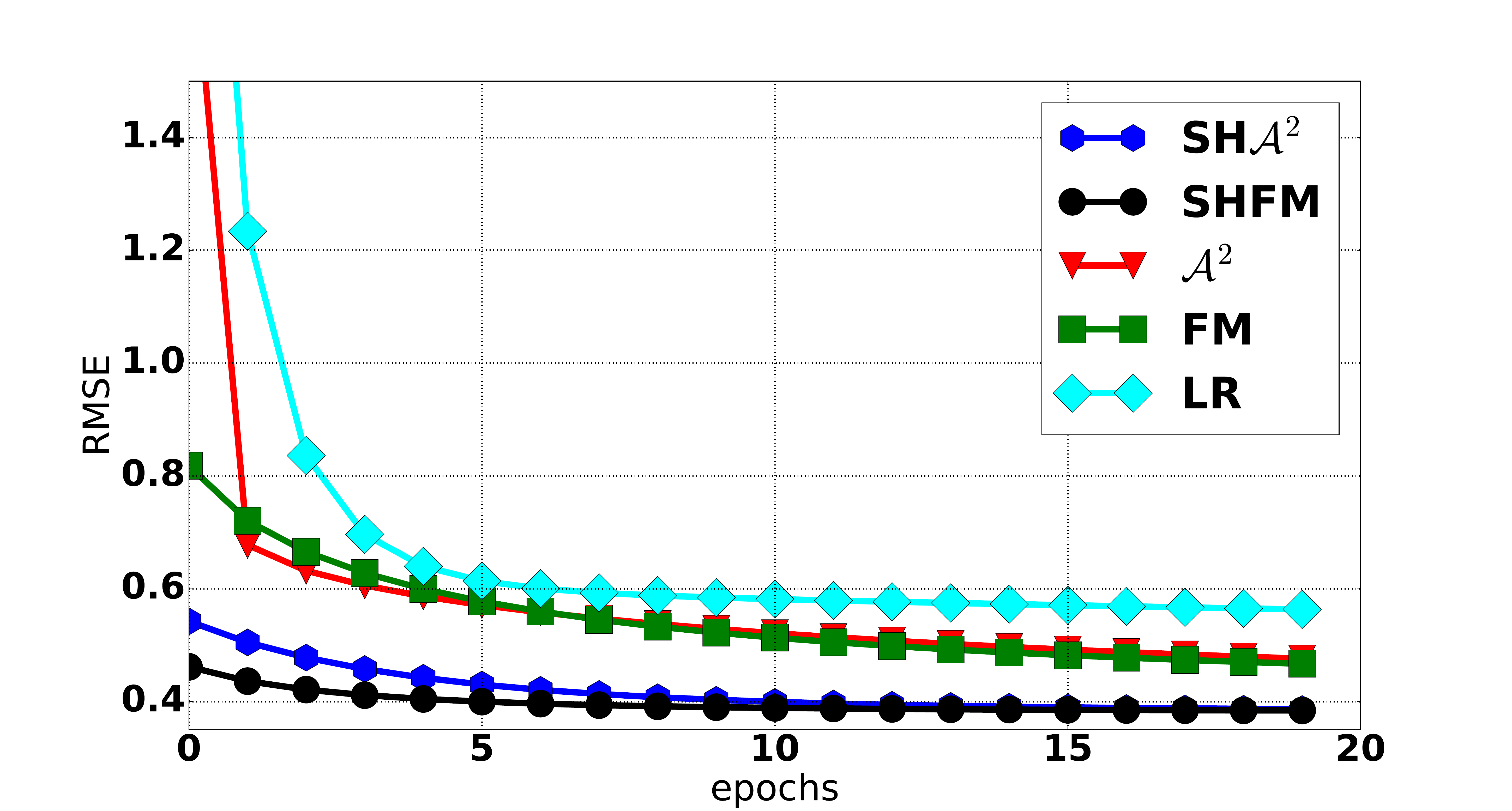}
			\caption{RMSE (E2006)}
			\label{fig:rmse_E2006}
		\end{subfigure}
	\hfill
		\begin{subfigure}[b]{.49\columnwidth}
			\centering
			\includegraphics[width=1\columnwidth]{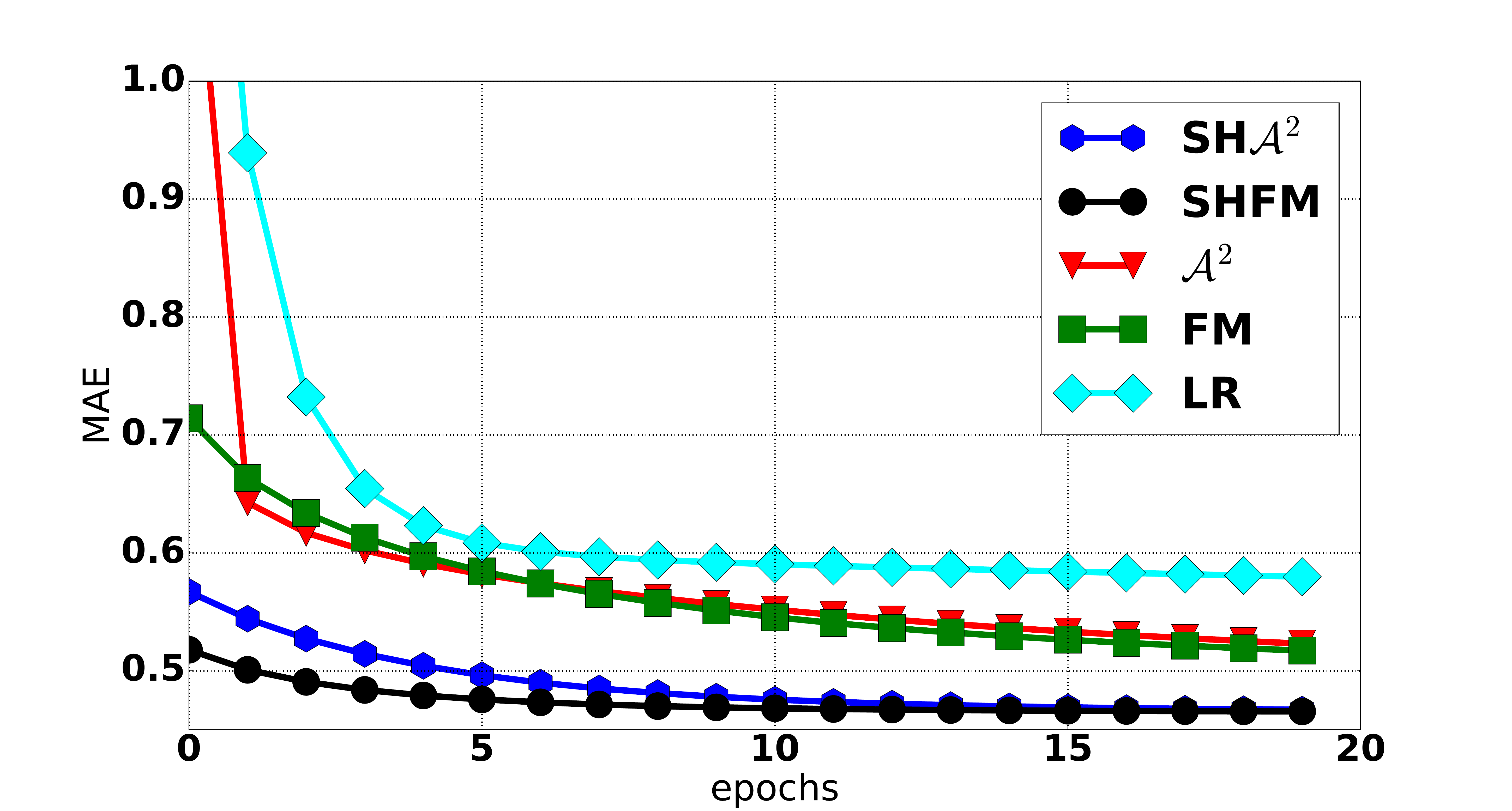}
			\caption{MAE (E2006)}
			\label{fig:mae_E2006}
		\end{subfigure}
\caption{Prediction performance for the two tasks.}
\label{fig:perf_WDYR}
	\end{figure}
	
	
	\subsection{Sensitivity Analysis}
	Experiments are carried out for understanding how hyper-parameters affect the proposed models.
	Experimental results show our models trained with Algorithm~\ref{algo:1} can perform well even with high sparsity in the study of $\lambda_1$.
	In addition, the run time analysis of models with different $k$ provides justification for Proposition~\ref{prop:kd} and~\ref{prop:kd2}.
	To save space, we may only report the experimental results for one of our proposed models on one of the datasets, because similar results are yielded by experiments for the other model or dataset.
	  
	\noindent\textit{The scale of $L_1$ regularization ($\lambda_1$).}
	The scale of $L_1$ regularization ($\lambda_1$) in Algorithm~\ref{algo:1} plays the role of making trade-off between minimizing training loss and maximizing model sparsity. 
	As shown in (\ref{eq:updateV}), given a certain co-ordinate $(i,f)$ and a sample $\boldsymbol{x}^t$, if $|z_{i,f}^t|$ is smaller than $\lambda_1$, the corresponding parameter $V_{i,f}^t$ would become zero after the update.
	Therefore, as the value of $\lambda_1$ increases, the proposed models become sparser.
	To study the influence of $\lambda_1$, we train our proposed models with various values of $\lambda_1$ (see Appendix).
	We set other hyper-parameters as described in~\ref{subsec:effe}.
	By results shown in Table~\ref{tab:l1}, we find that SHFMs and SH$\mathcal{A}^2$ trained with Algorithm~\ref{algo:1} can be both sparse and effective.
	In detail, for the WDYR dataset, SHFMs and SH$\mathcal{A}^2$ achieve the best testing error with sparsity equal to 0.631 and 0.704, respectively.  
	Then again, for the E2006 dataset, even if the models that achieve the best RMSE and MAE on the testing set are dense.
	The very sparse SHFMs (0.999 sparsity) and SH$\mathcal{A}^2$ (0.937 sparsity) do not cause significant increase in RMSE or MAE on testing samples.
	%
	%
	%
	%
	
	\noindent\textit{The number of latent dimensions ($k$).}
	The value of $k$ can affect the proposed models in terms of both effectiveness and efficiency. 
	To study impact of $k$ for the proposed models, we fix other parameters as mentioned in~\ref{subsec:effe} and carry out a series of experiments with different values of $k$. 
	Training loss, testing errors and GPU time per epoch are measured for each value of $k$.
	Each epoch includes training with all training samples and testing on both training and testing sets.
	Intuitively, as the number of latent dimensions increases, the functions that can be represented by both SHFMs and SH$\mathcal{A}^2$ become more abundant.
	Thus, the minimal training loss that our proposed models can reach within a certain number of epochs becomes smaller.
	But this also leads to more expensive computation and may cause the problem of overfitting.
	The experimental results in Fig.~\ref{fig:k_E2006} shows that with larger number of latent dimensions, the proposed models deliver smaller training loss, testing RMSE and MAE.
	However, it is shown that larger value of $k$ can also cause overfitting.
	For the cases of $k=20$ and $k=50$, while the training loss is still dropping, the testing error (measured by RMSE and MAE) starts to increase after several epochs for both proposed models.
	In Table~\ref{tab:k}, we show that the relationship between $k$ and GPU time per epoch is close to linear. 
	This supports Proposition~\ref{prop:kd} and~\ref{prop:kd2}.

	In addition, we monitor whether the assumptions of Proposition~\ref{prop:sh} are held for each value of $\lambda_1$ and $k$ in grid search (see Appendix) and find the probability of cases with either $\boldsymbol{v}_0 = \boldsymbol{0}$ or $\boldsymbol{v}_i \perp \boldsymbol{v}_0, \boldsymbol{v}_i \not = \boldsymbol{0}$ is negligible compared to that of $\boldsymbol{v}_i = \boldsymbol{0}$ which can reach more than 0.999 when $\lambda_1$ is large.
	Therefore, these observations provide justification for our assumptions.
	%

	
	\begin{figure}[tbh!]
		\centering
		\begin{subfigure}[b]{.49\columnwidth}
			\centering
			\includegraphics[width=1\columnwidth]{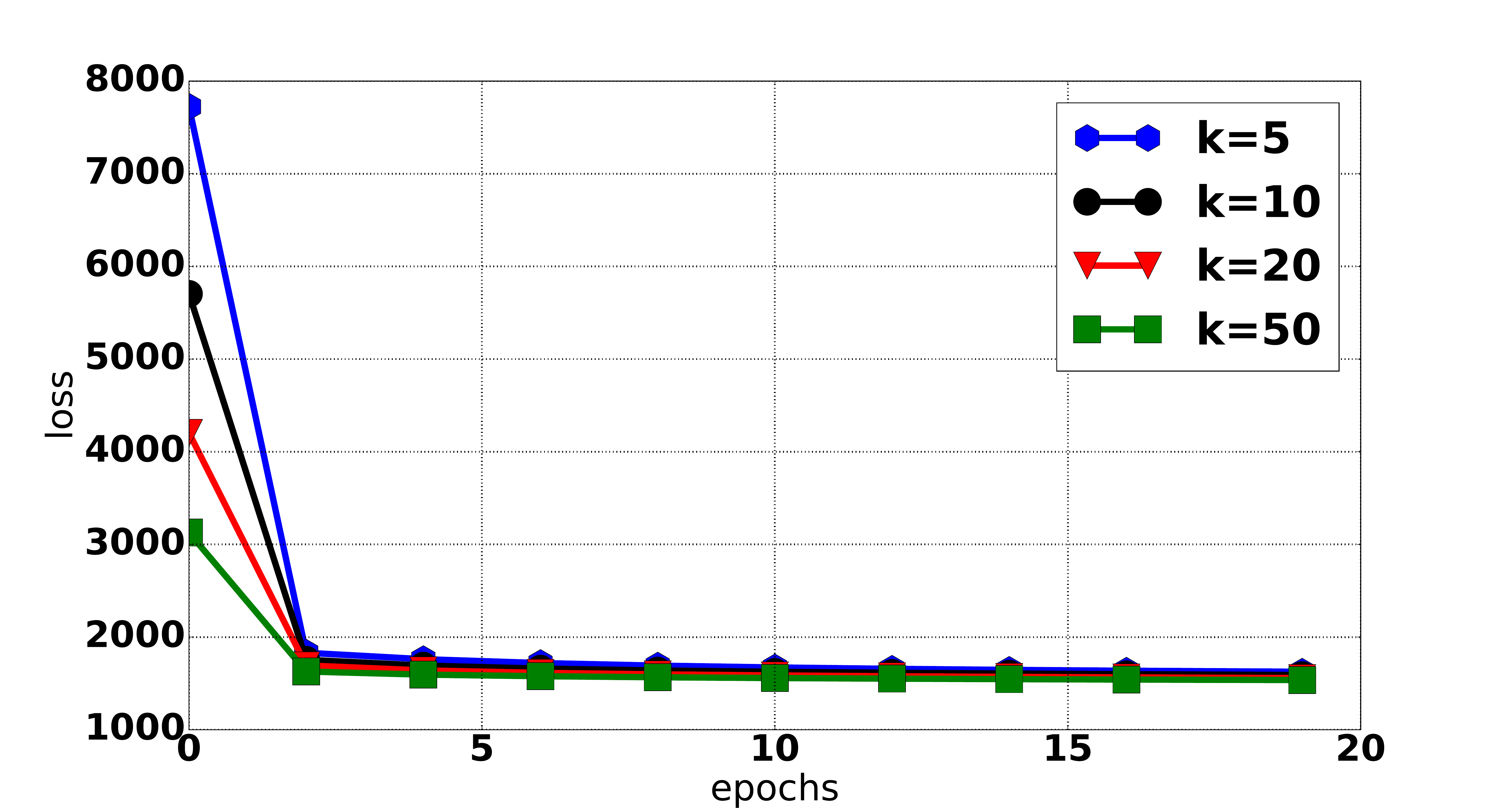}
		\end{subfigure}
	\hfill
		\begin{subfigure}[b]{.49\columnwidth}
			\centering
			\includegraphics[width=1\columnwidth]{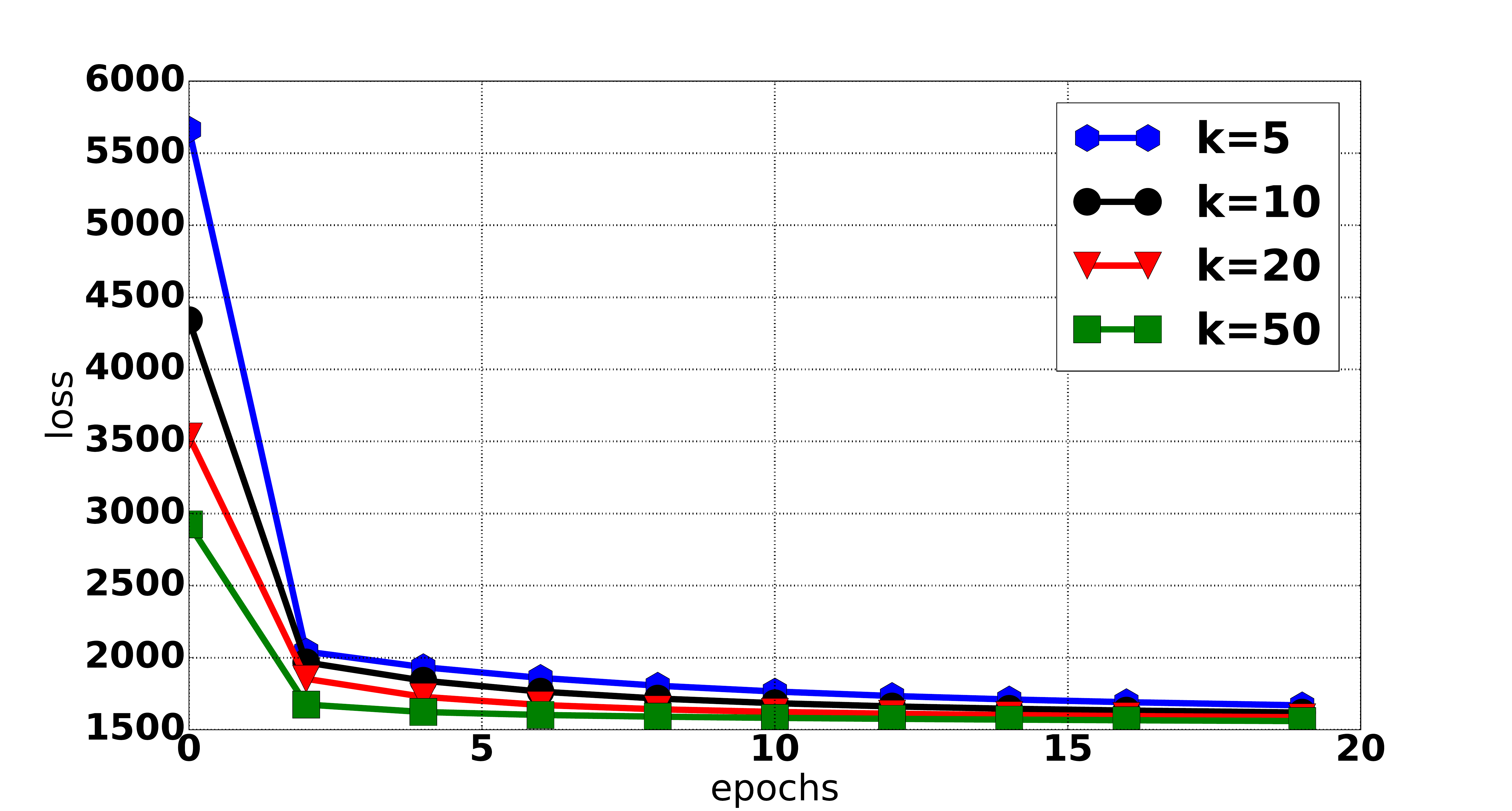}
		\end{subfigure}
		\begin{subfigure}[b]{.49\columnwidth}
			\centering
			\includegraphics[width=1\columnwidth]{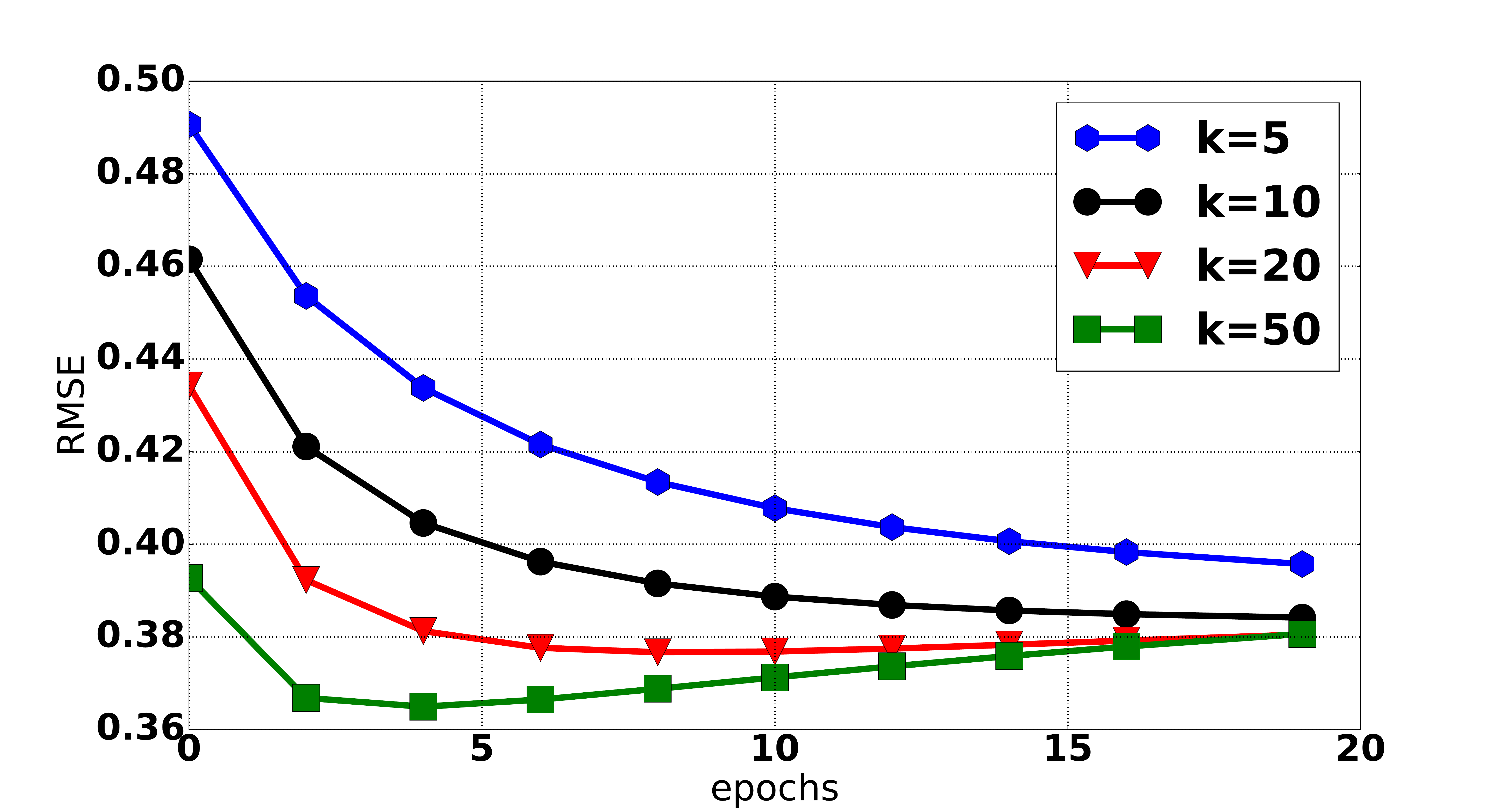}
		\end{subfigure}
	\hfill
	\begin{subfigure}[b]{.49\columnwidth}
		\centering
		\includegraphics[width=1\columnwidth]{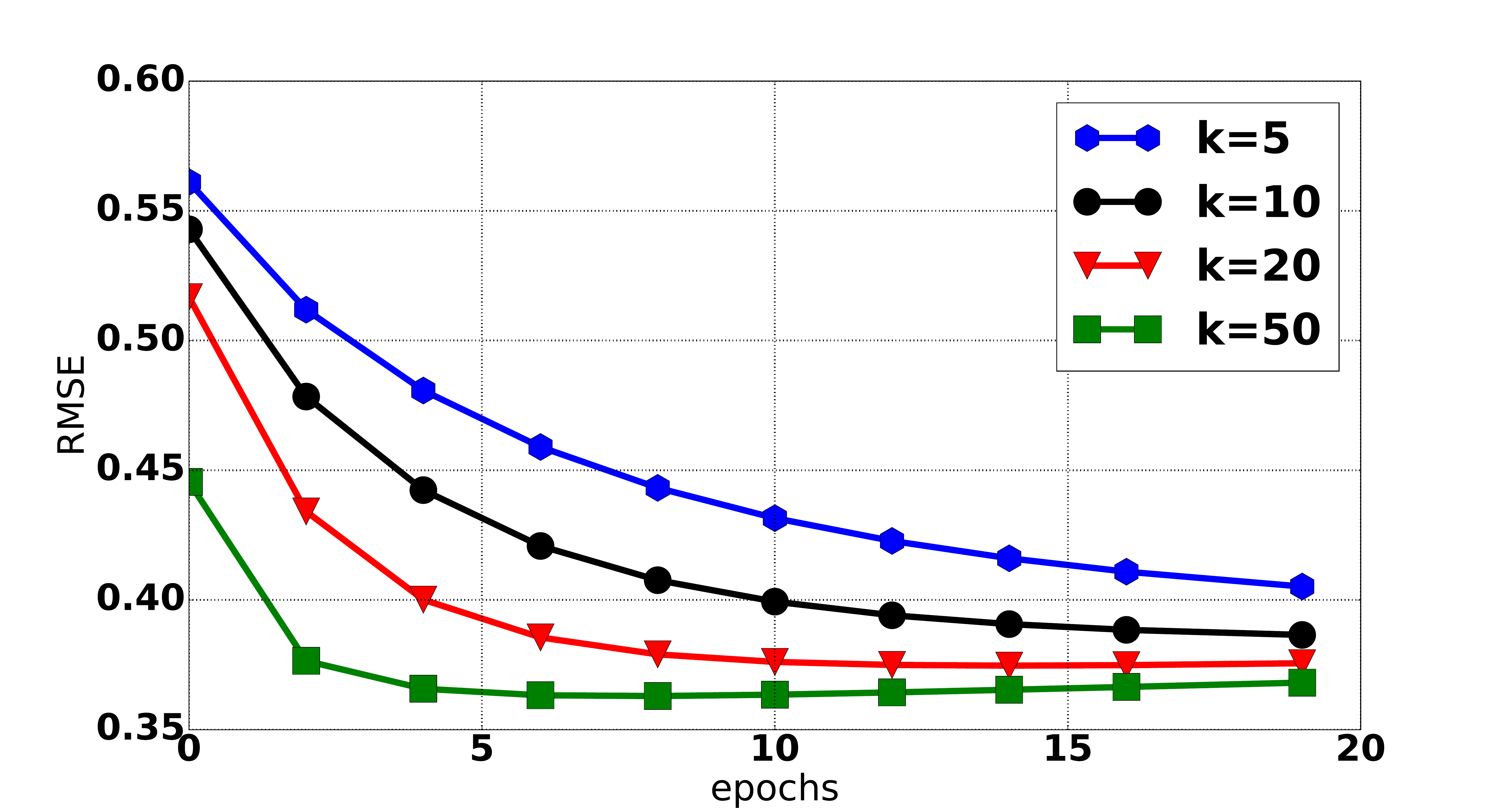}
	\end{subfigure}
	\begin{subfigure}[b]{.49\columnwidth}
	\centering
	\includegraphics[width=1\columnwidth]{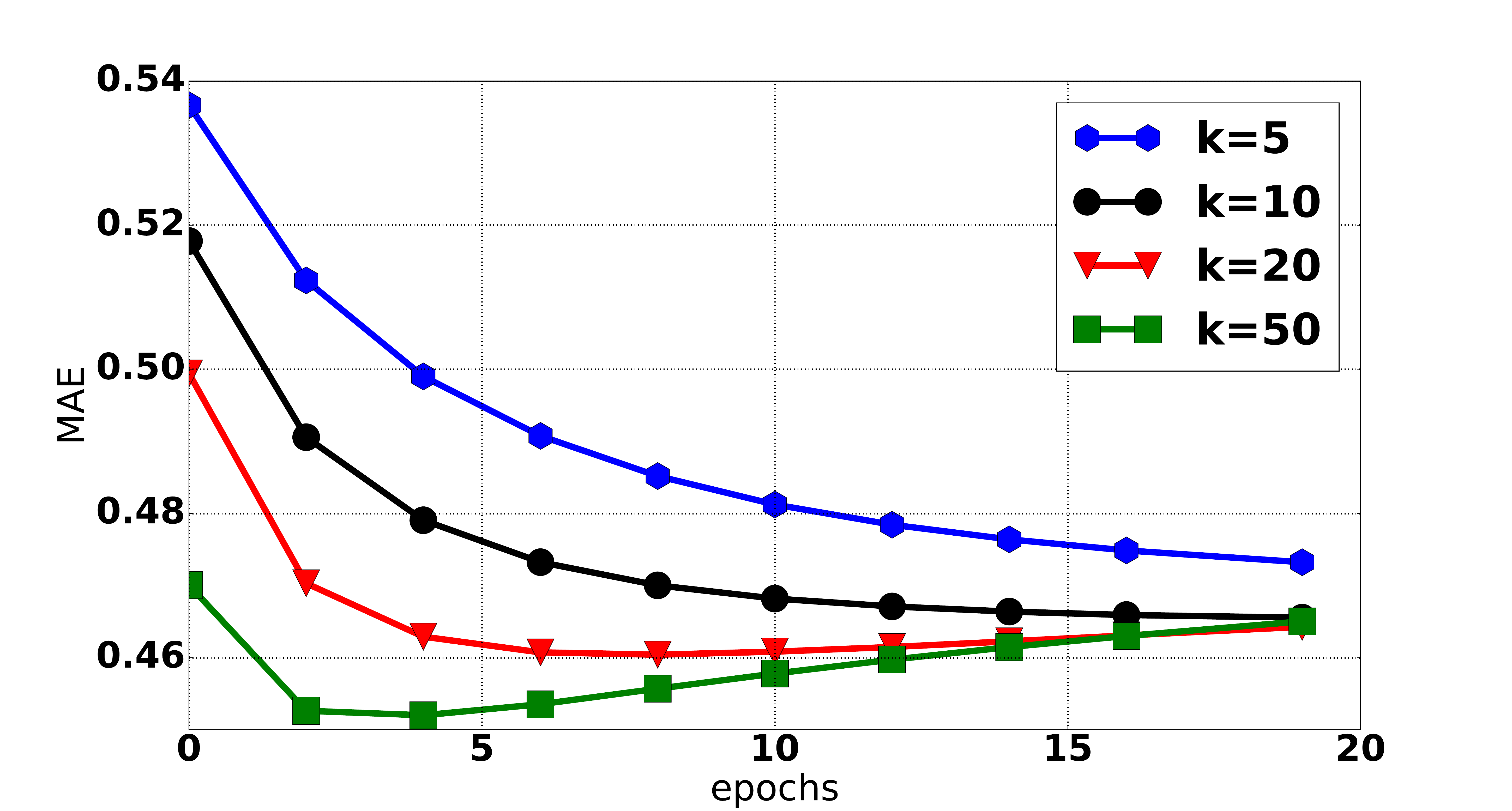}
	\end{subfigure}
	\hfill
		\begin{subfigure}[b]{.49\columnwidth}
			\centering
			\includegraphics[width=1\columnwidth]{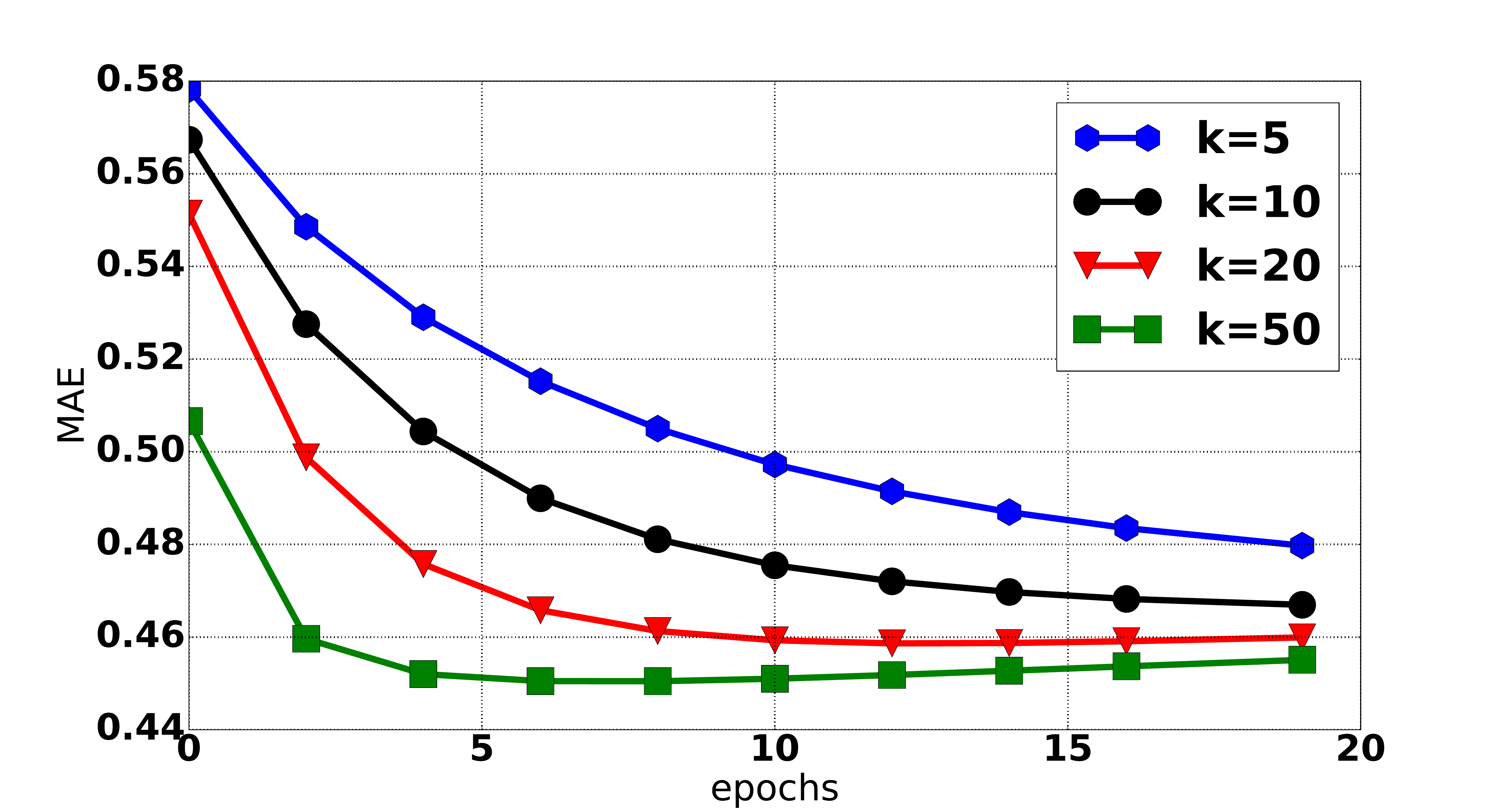}
		\end{subfigure}
		\caption{The impact of the number of latent dimensions ($k$) on training loss, RMSE and MAE of SHFMs and SH$\mathcal{A}^2$ for E2006-tfidf dataset.}
		\label{fig:k_E2006}
	\end{figure}
	
	\begin{table}[tbh!]%
		\small
		\renewcommand{\arraystretch}{1.2}
		\caption{\textmd{Sparsity and Performance with Various $\lambda_1$}}
		\label{tab:l1}
		\centering
		\subcaption{WDYR dataset}
		\begin{tabular}{|c| c| c|c|c|c|}
			\hline
			Model & $\lambda_1$&Sparsity & Loss & Mic-F1 & Mac-F1 \\ \hline
			\multirow{4}{*}{SHFMs}&$10^{-5}$ & 0.630 &0.331 &0.792 &0.776 \\ \cline{2-6} 
			&$10^{-4}$ & 0.631& 0.337& \textbf{0.801}&\textbf{0.787} \\ \cline{2-6} 
			&$10^{-3}$ & 0.854& 0.839&0.758 &0.728 \\ \cline{2-6} 
			&$10^{-2}$ & 0.999& 1.550&0.593 &0.500  \\ \hline
			\multirow{4}{*}{SH$\mathcal{A}^2$}&$10^{-5}$ &0.631 & 0.604 &0.793 &0.777 \\ \cline{2-6} 
			&$10^{-4}$ &0.704 & 0.651& \textbf{0.798}&\textbf{0.781} \\ \cline{2-6} 
			&$10^{-3}$ &0.999 &1.550 & 0.593&0.500 \\ \cline{2-6} 
			&$10^{-2}$ &0.999 &1.550 & 0.593&0.500 \\ \hline
		\end{tabular}
	\smallskip
		\subcaption{E2006 dataset}
		\begin{tabular}{|c| c| c|c|c|c|}
			\hline
			Model & $\lambda_1$&Sparsity & Loss & RMSE & MAE \\ \hline
			\multirow{5}{*}{SHFMs}&$10^{-3}$ & 0.003 &1590.85 &\textbf{0.384} &\textbf{0.466} \\ \cline{2-6} 
			&$10^{-2}$ & 0.018& 1591.93&0.385 &0.466 \\ \cline{2-6} 
			&$0.1$ & 0.137&1591.69 &0.385 &0.466 \\ \cline{2-6} 
			&$1.0$ & 0.582&1596.32 &0.386 &0.467 \\ \cline{2-6}
			&$10.0$ & 0.999& 1632.87&0.386 &0.470 \\ \cline{2-6}
			&$100.0$ & 0.999& 7738.07&2.561 &1.509  \\ \hline
			\multirow{4}{*}{SH$\mathcal{A}^2$}&$10^{-3}$ & 0.002  & 1619.3& \textbf{0.386} &\textbf{0.467} \\ \cline{2-6} 
			&$10^{-2}$ & 0.023 &1619.3 &0.387  &0.467 \\ \cline{2-6} 
			&$0.1$ & 0.192& 1620.12 & 0.387&0.467 \\ \cline{2-6} 
			&$1.0$ & 0.675& 1627.47&0.388 &0.468 \\ \cline{2-6} 
			&$10.0$ &0.937 &1650.46 & 0.390&0.470 \\ \cline{2-6} 
			&$100.0$ &0.995 &  1732.65&0.423 & 0.493 \\ \hline
		\end{tabular}
	\end{table}

\begin{table}[tbh!]%
	\small
	\renewcommand{\arraystretch}{1.2}
	\caption{\textmd{GPU Time per Epoch for different values of $k$}}
	\label{tab:k}
	\centering
	\begin{tabular}{| c|c|c|c|c|}
		\hline
		& \multicolumn{2}{|c|}{SHFMs} & \multicolumn{2}{|c|}{SH$\mathcal{A}^2$}\\ \hline
		 $k$ &WDYR & E2006 & WDYR & E2006 \\ \hline
		$5$ & 49.58 &  5.23&  50.53 & 5.42  \\ 
		$10$ & 76.51 &  7.13 & 67.82  & 7.28\\ 
		$20$ & 142.49&  16.97 &   130.50& 18.09  \\  
		$50$ & 337.44&  38.87 &  342.54&  39.54  \\ \hline
	\end{tabular}
\end{table}
	
\section{Related Work}
\label{sec:rw}
	\noindent\textbf{Factorization Machines and ANOVA kernel regression.}
	In~\cite{rendle2010factorization}, Rendle proposed FMs and showed that it can outperform SVM and PITF~\cite{rendle2010pairwise} in various tasks.
	Variants of FMs have been shown to be effective and efficient in click-through rate prediction~\cite{juan2016field,li2016difacto,pan2016sparse}, recommendation systems~\cite{nguyen2014gaussian,loni2014cross} and microblog retrieval~\cite{qiang2013exploiting}.
	Recently, Blondel et al.~\cite{blondel2016polynomial} showed that FMs belong to the class of ANOVA kernel regression.
	%
	
	\noindent\textbf{Hierarchical structures in parameters.}
	In~\cite{choi2010variable}, Choi et al. extended Lasso~\cite{tibshirani1996regression} with element-wise $L_1$ regularization for imposing strong hierarchy.
	In~\cite{bien2013lasso}, Bien et al. justified weak hierarchy with assumption that a feature is not any more special than its linear transformation. They also proposed to guarantee weak hierarchy with constraints.
	But this method is limited to $L_1$ regularization.
	Similarly, Zhong et al.~\cite{zhong2013efficient} proposed constraints for strong hierarchy and solvers for $L_1$, $L_2$ and $L_{\infty}$ regularization.
	%
	%
	In~\cite{li2016difacto}, Li et al. treated strong hierarchy in FMs as an instance of structured sparsity~\cite{negahban2009unified} and mentioned the method which adds a complicated regularization term.
	However, solving these optimization problems with these constraints or complicated regularization terms are challenging.
	
	\noindent\textit{Online sparse learning.}
	In~\cite{zinkevich2003online}, Zinkevich et al. showed that online gradient descent can be considered as a special case of mirror descent~\cite{beck2003mirror}, which states the closed form update implicitly as an optimization.
	Following this, regarding learning sparse models for large-scale data, algorithms such as COMID~\cite{duchi2011adaptive} and FTRL class algorithms are proposed.
	In~\cite{mcmahan2010unified,mcmahan2011follow}, FTRL-Proximal is claimed to be the most efficient algorithm amongst them in terms of producing sparsity. 
\section{Conclusion}
\label{sec:conclu}
In this work, we propose SHFMs and SH$\mathcal{A}^2$, where strong hierarchy is imposed without extra constraints or complicated regularization terms.  
A FTRL-Proximal algorithm is also derived for learning these models.
Analysis and experiments are done to show that it only takes linear time and space complexity to train the models with the algorithm or make a prediction with the models.
Evaluated with two data mining tasks, we conclude that the proposed models outperform FMs, $\mathcal{A}^2$ and the generalized linear models.
Furthermore, our models can reach high sparsity without significant loss of performance when trained by the algorithm we derive.
For future work, we plan to apply these models to other challenging data mining tasks such as recommendation systems and heterogeneous data mining.






\bibliographystyle{siamplain}
\bibliography{bib}

\end{document}